\tikzset{
	main/.style={circle, minimum size = 5mm, thick, draw =black!80, node distance = 10mm},
	connect/.style={-latex, thick},
	box/.style={rectangle, draw=black!100}
}
\newcommand\num[1]{\textcolor{black}{{#1}}}
\newcommand\sn{\texttt{Dugong}} 
\newenvironment{proof}{\paragraph{Proof:}}{\hfill$\square$}
\newcommand\tradimp{$16.0$}
\newcommand\dpimp{$24.2$}
\newcommand\depsimp{$10.2$}
\newcommand\shareimp{$9.2$}
\newcommand\priorimp{$13.7$}
\newcommand\priorimpdev{$1.7$}
\newcommand{\changeoperator}[1]{%
  \csletcs{#1@saved}{#1@}%
  \csdef{#1@}{\changed@operator{#1}}%
}
\newcommand{\changed@operator}[1]{%
  \mathop{%
    \mathchoice{\textstyle\csuse{#1@saved}}
               {\csuse{#1@saved}}
               {\csuse{#1@saved}}
               {\csuse{#1@saved}}%
  }%
}
\let\oldnl\nl
\newcommand{\nonl}{\renewcommand{\nl}{\let\nl\oldnl}}
\title{Multi-Resolution Weak Supervision \\for Sequential Data}
\author[$\dagger$]{Frederic~Sala$^*$}
\author[$\dagger$]{Paroma~Varma\thanks{Equal Contribution}}
\author[$\dagger$]{Jason~Fries}
\author[$\dagger$]{Daniel~Y.~Fu}
\author[$\dagger$]{Shiori~Sagawa}
\author[$\dagger$]{Saelig~Khattar}
\author[$\dagger$]{Ashwini~Ramamoorthy}
\author[$\ddagger$]{Ke~Xiao}
\author[$\dagger$]{Kayvon~Fatahalian}
\author[$\dagger$]{James~Priest}
\author[$\dagger$]{Christopher~R{\'e}}
\affil[$\dagger$]{Stanford University}
\affil[$\ddagger$]{University of Massachusetts Amherst}
\affil[ ]{\footnotesize{\texttt{\{fredsala, paroma, jfries, danfu, sagawas, saelig, ashwinir, \. }}}
\affil[ ]{\footnotesize{\texttt{\. kayvonf, jpriest, chrismre\}@stanford.edu, kexiao@cs.umass.edu}}}
\begin{document}

\maketitle

\begin{abstract}

Since manually labeling training data is slow and expensive, recent industrial and scientific research efforts have turned to \emph{weaker} or noisier forms of supervision sources.
However, existing weak supervision approaches fail to model \emph{multi-resolution} sources for sequential data, like video, that can assign labels to individual elements or collections of elements in a sequence. 
A key challenge in weak supervision is estimating the unknown accuracies and correlations of these sources without using labeled data. Multi-resolution sources exacerbate this challenge due to complex correlations and sample complexity that scales in the length of the sequence.
We propose \sn{}, the first framework to model multi-resolution weak supervision sources with complex correlations to assign probabilistic labels to training data.  
Theoretically, we prove that \sn{}, under mild conditions, can uniquely recover the unobserved accuracy and correlation parameters and use parameter sharing to improve sample complexity. 
Our method assigns \emph{clinician-validated} labels to population-scale biomedical video repositories, helping outperform traditional supervision by $36.8$ F1 points and addressing a key use case where machine learning has been severely limited by the lack of expert labeled data.
On average, \sn{} improves over traditional supervision by \tradimp{} F1 points and existing weak supervision approaches by \dpimp{} F1 points across several video and sensor classification tasks.

\end{abstract}

\section{Introduction}
\label{sec:intro}


Many machine learning models rely on a large amount of labeled data for their success. However, since hand-labeling training sets is slow and expensive, domain experts are turning to \emph{weaker}, or noisier forms of supervision sources like heuristic patterns~\cite{hearst1992automatic}, distant supervision~\cite{mintz2009distant}, and user-defined programmatic functions~\cite{ratner2016data} to generate training labels.
The goal of \emph{weak supervision} frameworks is to automatically generate training labels to supervise arbitrary machine learning models by estimating unknown source accuracies~\cite{Ratner16,xiao2015learning,takamatsu2012reducing,khetan2017learning,guan2018said,zhan2019sequentialws}.

Practitioners can therefore leverage the power of complex, discriminative models without hand-labeling large training sets by encoding domain knowledge in supervision sources.
This has achieved state-of-the-art performance in many applications ~\cite{niu2012deepdive,xiao2015learning} and has been deployed by several large companies~\cite{bach2018snorkel,dehghani2017learning,dehghani2017neural,liang2016neural,mahajan2018exploring,jia2017constrained}. 
However, current weak supervision techniques do not account for sources that assign labels at \emph{multiple resolutions} (e.g. labeling individual elements and collections of elements), which is common in sequential modalities like sensor and video.

Consider training a deep learning model to detect interviews in TV news videos~\cite{fu2019rekall}. As shown in Figure~\ref{fig:multigranular}, supervision sources to generate \emph{training labels} can draw on indirect signals from closed caption transcripts (per-scene), bounding box movement between frames (per-window), and pixels in the background of each frame (per-frame). 
However, existing weak supervision frameworks cannot model two key aspects of this style of sequential supervision.
First, sources are \emph{multi-resolution} and can assign labels on a per-frame to per-window to per-scene basis, implicitly creating \emph{sequential correlations} among the noisy supervision sources that can lead to conflicts within and across resolutions. 
Second, we have no principled way to incorporate \emph{distribution prior}, like how frames with interviews are distributed within a scene.

\begin{figure}
  \includegraphics[width=\columnwidth]{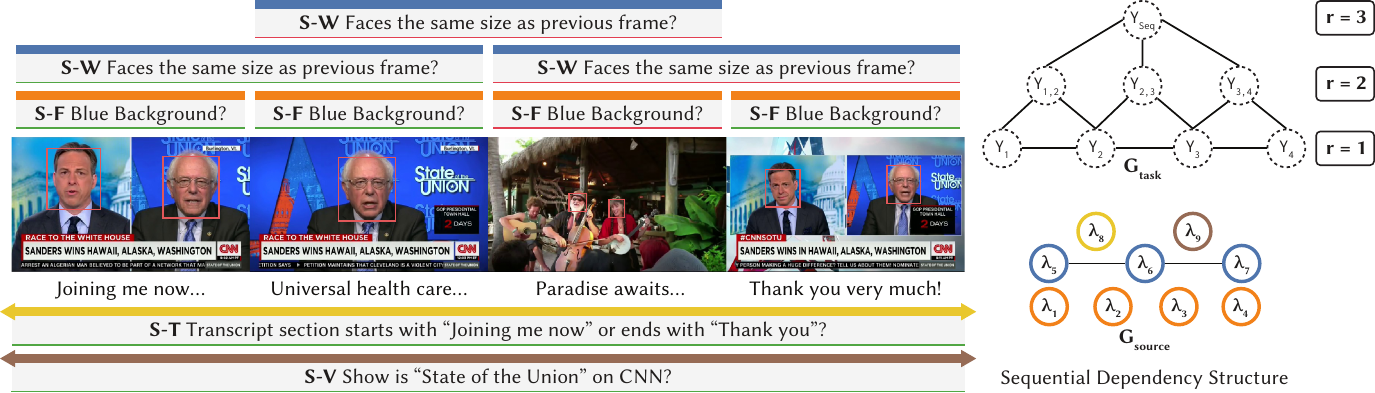}
    \caption{Multi-resolution weak supervision sources to label video analytics training data. S-X outputs noisy label vectors $\lambda_j$ and represents various supervision sources at different resolutions: Video (V), Transcript (T), Window (W), and Frame (F). We show the graphical model structure for modeling these sources at different resolutions: dotted nodes represent latent true labels, solid nodes represent the noisy supervision sources, and edges represent sequential relations.}
  \label{fig:multigranular}
\end{figure}

The core technical challenge in this setting is integrating diverse sources with unknown correlations and accuracies at scale \emph{without observing any ground truth labels}. 
Traditionally, such issues have been tackled via probabilistic graphical models, which are expressive enough to capture sequential correlations in data.
Unfortunately, learning such models via classical approaches such as variational inference~\cite{wainwright2008graphical} or Gibbs sampling~\cite{koller2009probabilistic} presents both practical and theoretical challenges: these techniques often fail to scale, in particular in the case of long sequences. 
Moreover, algorithms for latent-variable models may not always converge to a unique solution, especially in cases with complex correlations.

We propose \sn --- the first weak supervision framework to integrate multi-resolution supervision sources of varying quality and incorporate distribution prior to generate high-quality training labels.
Our model uses the agreements and disagreements among diverse supervision sources, instead of traditional hand-labeled data, at different resolutions (e.g., frame, window, and scene-level) to output probabilistic \emph{training labels} at the required resolution for the end model.
We develop a simple and scalable approach that estimates parameters associated with source accuracy and correlation by solving a pair of linear systems. 

We develop conditions under which the underlying statistical model is identifiable. With mild conditions on the correlation structure of sources, we prove that the model parameters are recoverable directly from the systems. 
We show that we can reduce the dependence of sample complexity on the length of the sequence from exponential to linear to independent, using various degrees of parameter sharing, which we analyze theoretically. 
Applying recent results in weak supervision literature, we then show that the generalization error of the end model scales as $O(1/\sqrt{n})$ in the number of unlabeled data points---the same asymptotic rate as supervised approaches.

We experimentally validate our framework on five real-world sequential classification tasks over modalities like medical video, gait sensor data, and industry-scale video data. For these tasks, we collaborate with domain experts like cardiologists to create multi-resolution weak supervision sources.
Our approach outperforms traditional supervision by \tradimp{} F1 points and existing state-of-the-art weak supervision approaches by \dpimp{} F1 points on average.

We also create an SGD variant of our method that enables implementation in modern frameworks like PyTorch and achieves $90\times$ faster runtimes compared to prior Gibbs-sampling based approaches~\cite{ratner2016data,bach2017learning}.
This scalability enables using clinician-generated supervision sources to automatically label population-scale biomedical repositories such as the UK Biobank~\cite{sudlow2015uk} on the order of days, addressing a key use case where machine learning has been severely limited by the lack of expert labeled data and improving over state-of-the-art traditional supervision by $36.8$ F1 points.


\begin{figure}
 \includegraphics[width=\columnwidth]{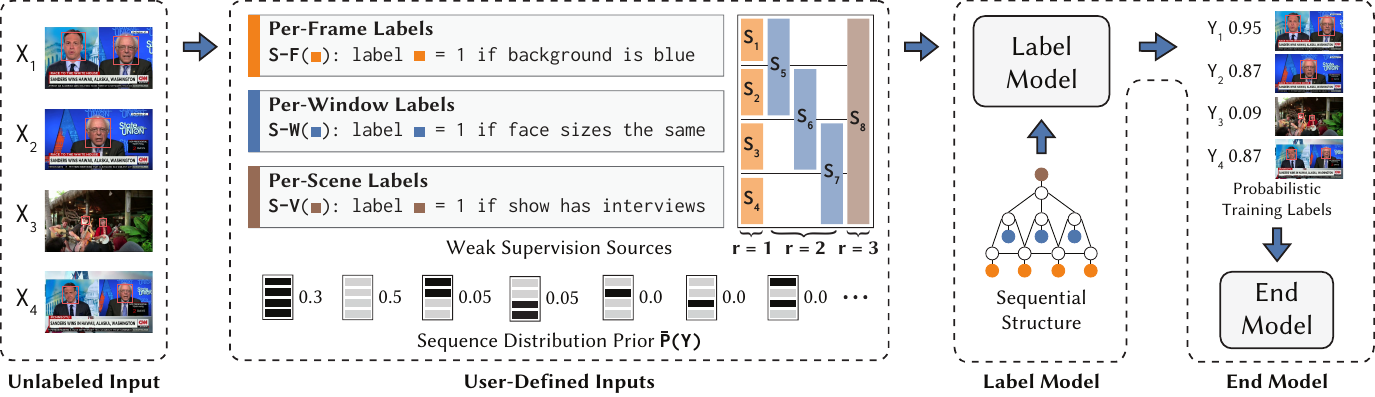}
 \caption{A schematic of the \sn{} pipeline.
 Users provide a set of unlabeled sequences $X = [X_1, \ldots, X_T]$; a set of weak supervision sources $S_1, \ldots, S_m$, each of which assigns labels at multiple resolutions (frame, window, scene); and a distribution prior $\bar{P}_Y$. The label model estimates the unknown accuracies and correlation strengths of the supervision sources and assigns probabilistic training labels to each element, which can be used to train a downstream end model.}
 \label{fig:overview}
\end{figure}

\section{Training Machine Learning Models with Weak Supervision}
\label{sec:para}
Practitioners often weakly supervise machine learning models by programmatically generating training labels through the process shown in Figure~\ref{fig:overview}. First, users provide multiple \emph{weak supervision sources}, which assign noisy labels to unlabeled data. These labels overlap and conflict, and a \emph{label model} is used to integrate them into probabilistic labels. These probabilistic labels are then used to train a discriminative model, which we refer to as an \emph{end model}.

While generating training labels across various sequential applications, we found that supervision sources often assign labels at \emph{different resolutions}: given a sequence with $T$ elements, sources can assign a single label per element, per collection of elements, or for the entire sequence. We describe a set of such supervision sources as \emph{multi-resolution}. For example in Figure~\ref{fig:multigranular}, to train an end model that detects interviews in TV shows, noisy labels can be assigned to each frame, each window, or each scene. Sources $\texttt{S-F}$, $\texttt{S-W}$, and $\texttt{S-V}$ each assign labels to a frame at resolution level $r=1$, a window at $r=2$, and scene at $r=3$, respectively. While each source operates at a specific resolution, the sources together are multi-resolution. 
The main challenge is combining source labels into probabilistic training labels by estimating source accuracies and correlations without ground-truth labels.

\subsection{Problem Setup}
\label{subsec:setup}
We set up our classification problem as follows:
\begin{itemize}
\item
Let $X = [X_1, X_2, \ldots, X_T] \in \mathcal{X}$ be an unlabeled sequence with $T$ elements (video frames in Figure~\ref{fig:multigranular}).
\item
For each sequence $X$, we assign labels to tasks at multiple resolutions ($Y_1$, $Y_{1,2}$, $Y_{seq}$ etc. in Figure~\ref{fig:multigranular}). We formally refer to the tasks using indices $\mathcal{T}=\{1,\ldots,|\mathcal{T}|\}$ ($|\mathcal{T}| = 4 + 3 + 1 = 8$ in Figure~\ref{fig:multigranular}).
\item
These tasks are at multiple resolutions ($3$ resolutions in Figure~\ref{fig:multigranular}) with the set of tasks at resolution $r$ denoted $R_r \subseteq \mathcal{T}$.
\item
$Y\in \mathcal{Y}$ is a vector $[y_1, \ldots, y_{|\mathcal{T}|}]$ of unobserved true labels for each task, and $(X,Y)$ are drawn i.i.d. from some distribution $\mathcal{D}$.
\end{itemize}

Users provide $m$ multi-resolution sources $S_1, \ldots, S_m$. Each source $S_j$ assign labels $\lambda_j$ to a set of tasks $\tau_j\subseteq\mathcal{T}$, (henceforth \emph{coverage set}), with size $s_j = |\tau_j|$. Each source only assigns labels at a specific resolution $r$, enforcing $\tau_j\subseteq R_r$ for fixed $r$. 
Users also provide a \emph{task dependency graph} $G_{\text{task}}$ specifying relations among tasks, a \emph{source dependency graph} $G_{\text{source}}$ specifying relations among supervision sources that arise due to shared inputs (Figure~\ref{fig:multigranular}), and a \emph{distribution prior} $\bar{P}(Y)$ describing likelihood of labels in a sequence (Figure~\ref{fig:overview}). While $G_{\text{source}}$ is user-defined, it can also be learned directly from the source outputs~\cite{bach2017learning, Varma19}. 

We want to apply weak supervision sources $S$ to an unlabeled dataset $X$ consisting of $n$ sequences, combine them into probabilistic labels, and use those to supervise an end model $f_w: \mathcal{X} \rightarrow \mathcal{Y}$ (Figure~\ref{fig:overview}). Since the labels from the supervision sources overlap and conflict, we learn a label model $P(Y | \lambda)$ that takes as input the noisy labels and outputs probabilistic labels \emph{at the required resolution} for the end model.

\subsection{Label Model}

\label{subsec:model}
Given inputs $X, S, G_{\text{task}}, G_{\text{source}}, \bar{P}(Y)$, we estimate the sources' unknown accuracies and correlation strengths.
Accuracy parameters $\mu$ and correlation parameters $\phi$ define a \emph{label model} $P_{\mu,\phi}(Y | \lambda)$, which can generate probabilistic training labels.
To recover parameters without ground-truth labels $Y$, we observe the agreements and disagreements of these noisy sources across different resolutions.

To recover these parameters, we form a graph $G$ describing all relations among sources and task labels, combining $G_{\text{source}}$ with $G_{\text{task}}$. The resulting graphical model encodes conditional independence structures. Specifically, if $(\lambda_j, \lambda_k)$ is not an edge in $G$, then $\lambda_j$ and $\lambda_k$ are independent conditioned on all other variables.

For ease of exposition, we assume the binary classification setting where $y_i \in \{-1,1\}$, $\lambda_i \in \{-1,0,1\}$ (we reserve $0$ for abstentions) for $T$ per-element tasks and $1$ per-sequence task. The accuracy parameter for source $j$ for some $Z, W \in \{-1,1\}^{s_j+1}$ is
\begin{align}
    \mu_j(Z, W) = P \left( \lambda_j = Z  \text{ }|\text{ } \ Y_{\tau_j} = W \right).
\end{align}
Intuitively, this parameter captures the \emph{accuracy} of each supervision source with respect to the ground truth labels in coverage set $\tau_j$. Next, for each correlation pair of sources $(\lambda_j,\lambda_k)$ and for some $Z_1 \in \{-1,1\}^{s_j}, Z_2 \in \{-1,1\}^{s_k}, W \in \{-1,1\}^{|\tau_j \cup \tau_k|}$, we wish to learn 
\begin{align}
    \phi_{j,k}(Z_1, Z_2, W) =  P \left( \lambda_j = Z_1, \lambda_k = Z_2  \text{ }|\text{ } \ Y_{\tau} = W \right),
\end{align}
where $\tau = \tau_j \cup \tau_k$. We can also learn the probability that a source abstains: details in the Appendix. 

\subsection{Parameter Reduction}
\label{subsec:tying}
Our assumption above of conditioning only on ground-truth labels for tasks in the source's coverage set instead of the full $\mathcal{T}$ greatly reduces the number of parameters. While we have at least $2^{T}$ parameters without the assumption, we now only need to learn $2^{2s_j}$ parameters per source, where $s_j$ tends to be much smaller than $T$.

In addition, we can model each source accuracy conditioned on each task, rather than over its full coverage set, reducing from $2^{2s_j}$ to $4s_j$ parameters and going from exponential to linear dependence on coverage set size, which is at most $T$.  
Lastly, we can also use parameter sharing: we share across sources that apply the same logic to label different, same-resolution tasks ($\mu_1 = \mu_2=\mu_3 = \mu_4$ in Figure~\ref{fig:multigranular}).

\section{Modeling Sequential Weak Supervision}
\label{sec:model}
The key challenge in sequential weak supervision settings is recovering the unknown accuracies and correlation strengths in our graphical model of multi-resolution sources, given the noisy labels, the source dependency structure, coverage sets, and distribution prior. 
We propose a provable algorithm that recovers the unique parameters with convergence guarantees by
reducing parameter recovery into systems of linear equations. These systems recover probability terms that involve the unobserved true label $Y$ by exploiting the pattern of agreement and disagreement among the noisy supervision sources at different levels of resolution (Section~\ref{sec:source_accuracy_estimation}). 
We theoretically analyze this algorithm, showing how the estimation error scales with the number of samples $n$, the number of sources $m$, and the length of the sequence $T$. Our approach additionally leverages repeated structures in sequential data by sharing appropriate parameters, significantly reducing sample complexity to no more than linear in the sequence length (Section~\ref{sec:scaling_sequential_supervision}).
Finally, we consider the impact of our estimation error on the end model trained with labels produced from our label model, showing that end model generalization scales with unlabeled data points as $O(1/\sqrt{n})$, the same asymptotic rate as if we had access to labeled data (Section~\ref{sec:scaling_sequential_supervision}).

\subsection{Source Accuracy Estimation Algorithm}
\label{sec:source_accuracy_estimation}
Our approach is shown in Algorithm~\ref{alg:label_model}: it takes as input samples of sources $\lambda_1, \ldots, \lambda_m$, the conditional independencies encoded in the graph $G$, and the prior $\bar{P}_{Y}$ and outputs the estimated accuracy and correlation parameters, $\hat{\mu}$ and $\hat{\phi}$ (for simplicity, we only show the steps for $\mu$ in  Algorithm~\ref{alg:label_model}.)

While we have access to the noisy labels assigned by the supervision sources, we do not observe the true labels $Y$ and therefore cannot calculate $\mu$ directly. However, given access to the user-defined distribution prior and the \emph{joint probabilities}, such as $P(\lambda_j(\{1\}), y_2)$, we can apply Bayes' law to estimate $\mu$ (Section 3.1.4). Since the joint probabilities also include the unobservable $Y$ term, we break it into the sum of \emph{product variables}, such as $P(\lambda_j(\{1\}) y_2 = 1)$ (Section 3.1.3). Note that we still have a dependence on the true label $Y$: to address this issue, we take advantage of (1) the conditional independence of some sources (Section 3.1.2), (2) the fact that we can observe the agreement and disagreements among the sources (Section 3.1.1), and (3) in the binary setting, $y^2 = 1$. 

We describe the steps of our algorithm and explain the assumptions we require, which involve the number of conditionally independent pairs of sources we have access to and how accurately they vote on their tasks.

\begin{algorithm}
	\SetKwRepeat{Do}{do}{for}%
	\SetKwInput{Input}{Input}
	\SetKwInOut{Output}{Output}
	\Input{Samples of sources $\lambda_1, \ldots, \lambda_n$, Dependency structure $G$, Dist. prior $\bar{P}(Y)$}
	\For {\text{source} $j \in \{1, \ldots, m\}$}{
	\For {coverage subsets $U, V \subseteq \tau_j$} { 
	         Using $G$, get source set $S_j$ where $\forall k,\ell \in S_j$, $\exists U_k, U_{\ell}$ \nonl \\ \quad s.t. $a_j(U,V)  \perp a_k(U_k,V) $,  $a_j(U,V)  \perp a_{\ell}(U_{\ell},V) $, $a_k(U_k,V)  \perp a_{\ell}(U_{\ell},V) $. Set $U_j = U$ \\
	\For {$k,\ell \in S_j \cup \{j\}$}{ 
		\vspace{0.2em}
		Calculate \textbf{gen. agreement measure}: $a_k(U_k,V) a_{\ell}(U_{\ell},V) = \prod_{U_k,U_{\ell}} \lambda_k(U_k) \lambda_{\ell}(U_{\ell})$ \\
		Form $q = \log \E{} {a_k(U_k,V) a_{\ell}(U_{\ell},V)}^2$ over coverage subsets $U_k,U_{\ell},V$ \\
	}

		Solve {\bf agreement-to-products} system: find $\ell_{U,V}$ s.t. $M\ell_{U,V} =  q$ \\	
	}
	\vspace{0.25em}
	Form product probability vector $r(\ell_{U,V})$ \\ 
	Solve {\bf products-to-joints} system: find $e$ s.t. $B_{2s_j}e = r$\\
	$\mu_j \leftarrow e / \bar{P}(Y)$ \\ 
}
		\Output{Parameter $\hat{\mu}$}
		\caption{Accuracy Parameter Estimation}
		\label{alg:label_model}
\end{algorithm}

\subsubsection{Generalized Agreement Measure}
Given the noisy labels assigned by the supervision sources, $\lambda_1, \ldots, \lambda_m$, we want some measure of agreement between these sources and the true label $Y$. For sources $j$ and $k$, let $U, U',V$ be subvectors of the coverage sets $\tau_j, \tau_k, \tau_j \cup \tau_k$, respectively. We use the notation $\prod_X A(X)$ to represent the product of all components of $A$ indexed by $X$. We then define a \emph{generalized agreement measure} as
$a_{j}(U, V) = \prod \lambda_{j}(U) \prod Y(V),$
which represents the agreement between the supervision source and the unknown true label when $U = V$ and $|U|=1$. Note that this term is \emph{not directly observable as it is a function of $Y$}.

Instead, we look at the product of two such terms:
$$a_{j}(U,V) a_{k}(U',V) = \prod_{U,U'} \lambda_{j}(U) \lambda_{k}(U') \prod_V (Y(V))^2 = \prod_{U,U'} \lambda_{j}(U) \lambda_{k}(U').$$
Since the $(Y(V))^2$ components multiply to 1 in the binary setting, we are able to represent the product of two generalized agreement measures in terms of the \emph{observable agreement and disagreement between supervision sources}. Therefore, we are able to calculate $a_{j}(U,V) a_{k}(U',V)$ across values of $U,V$ directly from the observed variables. 

\subsubsection{Agreements-to-Products System} 
Given the product of generalized agreement measures, we solve for terms that involve the true label $Y$, such as $a_{j}(U, V)$. Since we cannot observe these terms directly, we instead solve a system of equations that involve $\log ~ \E{} {a_j(U,V)}$, the \emph{log of the expectation of these values} when we have certain \emph{assumptions about independence} of different sources, conditioned on variables from $Y$. We give more details in the Appendix. As an example, note that if $\lambda_j(U)$ is independent of $\lambda_k(U')$ given $\prod Y(V)$ for $|V| = 1$, which is information that can be read off of the graphical model $G$, then
\begin{align}
\E{} { a_j(U,V)} \E{} {a_k(U',V)} = \E{} { a_j(U,V) a_k(U',V)} = \mathbb{E} \Big [\prod_{U,U'} \lambda_j(U) \lambda_k(U') \Big ]
\label{eq:indeps}
\end{align}
In other words, the conditional independencies of the sources translate to independencies of the accuracy-like terms $a$. 

Note that the middle term in \eqref{eq:indeps} can be calculated directly using observed $\lambda$'s. Now we wish to form a system of equations to solve for the terms on the left-most side.
We can take the log of the left-most term and the right-most term to form a system of linear equations, $M \ell = q$.
$M$ contains a row for each pair of sources, $\ell$ is the vector we want to solve for and contains the terms with $a_j(U,V)$, and $q$ is the vector we observe and contains the terms with $\lambda_j(U) \lambda_{k}(U')$.
We can solve this system up to sign if $M$ is full rank, which is true if $M$ has at least three rows. This is true if we have a group of at least three conditionally independent sources. 

\paragraph{Assumptions} We now have the notation to formally state our assumptions. We assume that each $a_j(U,V)$ has at least two other independent accuracies (equivalently, sources independent conditioned on $Y_V$) and $|\E{}{a_j(U,V)}| > 0$, i.e., our accuracies are correlated with our labels, positively or negatively), and that we have a list of such independencies (to see how to obtain such a list from the user-provided graphs, more information is in the Appendix). We also assume that on average, a group of connected sources have a better than random chance of agreeing with the labels, which enables us to recover the signs of the accuracies. These are standard weak supervision assumptions \cite{Ratner19}. 

Once we solve for $\E{}{ a_j(U,V)}$, we can calculate the \emph{product variable} probabilities $\rho_{j}(U,V) = P(a_j(U,V) = 1) = 1/2(1 + \E{}{ a_j(U,V)}).$
Note that product variable probabilities $\rho$ relies on the the true label $Y$, since $a_j(U,V)$ represents the generalized agreement between the source label and true label. However, we have now solved for this term \emph{despite not observing $Y$ directly}.

\subsubsection{Products-to-Joints System}
Given the product variable probabilities, we now want to solve for the \emph{joint probabilities} $p$ , such as $P(\lambda_{j,1}, Y_2)$. Fortunately, linear combinations of the appropriate $p_{j}(Z,W)  = P ( \lambda_j = Z , Y_{\tau_j} = W)$ result in $\rho_j(U,V)$ terms. Our goal is to solve for the unknown joint probabilities given the estimated $\rho_j$ product variables, user-defined distribution prior $\bar{P}_Y$, and observed labels from the sources $\lambda$. 

Say that $\lambda_1$ has coverage $\tau_1 = [1]$, so that it only votes on the value of $y_1$. Then, for $U = \{1\}, V = \{1\}$, we know $\rho_{1}(U,V) = P(\lambda_{1,1} y_1 = 1)$. But we have that $P(\lambda_{1,1} y_1 = 1) = p_{1}(+1,+1) + p_{1}(-1,-1)$, which is the agreement probability. Using similar logic, we can set up a series of linear equations:
\begin{align*}
\begin{bmatrix} 
1 & 1 & 1 & 1 \\
1 & 0 & 1 & 0 \\
1 & 1 & 0 & 0 \\
1 & 0 & 0 &1 
\end{bmatrix} \begin{bmatrix} p_{1}(+1, +1)\\ p_{1}(-1, +1)  \\ p_{1}(+1, -1) \\ p_{1}(-1, -1) \end{bmatrix} = \begin{bmatrix} 1 \\  P(\lambda_{1,1} = 1) \\ P(Y_1 = 1)  \\ \rho_{1}(U,V) \end{bmatrix} .
\end{align*}

Note that because of how we set up this system, the vector on the left-hand side contains the probabilities we need to estimate the joint probabilities. The right hand side vector contains either observable ($P(\lambda_{1,1} = 1)$), estimated ($\rho_{1}(U,V)$), or user-defined ($P(Y_1 = 1)$, from $\bar{P}_Y$) terms. In this example, the matrix is full-rank and we can therefore solve for the $p_1$ terms. 

To extend this system to the general case, we form a system of linear equations, $B_{2s_j} e = r$.
$B_{2s_j}$ is the \emph{products-to-joints} matrix (we discuss its form below), $e$ is the vector we want to solve for and contains the $p_{j}(Z,W)$ terms, and $r$ is the vector we have access to and contains observable, user-defined, and estimated $\rho_j(U,V)$ terms. 
$B_{2s_j}$ is $2^{2s_j} \times 2^{2s_j}$-dimensional 0/1 matrix. Let $\otimes$ be the Kronecker product; then, we can represent $B_{2s_j}$ as a Hadamard-like matrix (we show it is full rank in the Appendix):
\[B_{2s_j} =  \frac{1}{2} \begin{bmatrix} 1 & 1 \\ 1 & -1 \end{bmatrix} \otimes^{2s_j} + \frac{1}{2} 11^T.\]
We can now solve for terms required to calculate the joint probabilities and use them to obtain the $\mu$ parameters by using Bayes' law and the user-defined distribution prior 
$\mu_{j}(z,w) = p_{j}(Z,W) / P(Y_{\tau_j} = W).$ We can calculate the $\phi$ parameters in a similar fashion as $\mu$, except now we operate over \emph{pairs} of supervision sources, always working with products of correlated sources $\lambda_i \lambda_j$ (details in Appendix). 

\subsubsection{SGD-Based Variant}
Note that Algorithm~\ref{alg:label_model} explicitly builds and solves the linear systems that are set up via the agreement measure constraints. This involves a small amount of bookkeeping. However, there is a simple variant that relies on SGD for optimization and simply uses the constraints between the accuracies and correlations. That is, we use $\ell_2$ losses on the constraints (and additional ones required to make the probabilities consistent) and directly optimize over the accuracy and correlation variables $\mu, \phi$. Under the assumptions we have set up in this section, these algorithms are effectively equivalent; in the experiments, we use the SGD-based variant due to its ease of implementation in PyTorch.

\subsection{Theoretical Analysis: Scaling with Sequential Supervision}
\label{sec:theory}
\label{sec:scaling_sequential_supervision}
Our ultimate goal is to train an end model using the labels aggregated from the supervision sources using the estimated $\mu$ and $\phi$ for the label model. We first analyze Algorithm~\ref{alg:label_model} with parameter sharing as described in Section~\ref{subsec:tying} and discuss the general case in the Appendix. We bound our estimation error and observe the scaling in terms of the number of unlabeled samples $n$, the number of sources $m$, and the length of our sequence $T$. 
We then connect the generalization error to the end model to the estimation error of Algorithm~\ref{alg:label_model}, showing that generalization error scales asymptotically in $O(\sqrt{1/n})$, the same rate as supervised methods but in terms of number of \emph{unlabeled} sequences. 

We have $n$ samples of each of the $m$ sources for sequences of length $T$, and the graph structure $G = (V,E)$. We allow for coverage sets of size up to $T$. We assume the previously-stated conditions on the availability of conditionally independent sources are met, that $\forall j, | \mathbb{E} [ a_j(U,V) ] | \geq b^*_{\text{min}} > 0$, and that sign recovery is possible (for example, it is sufficient to have $ \forall j,U,V$,  $\sum_{\lambda_k \in S_j} \mathbb{E} [ a_k(U,V) ] > 0$ where $S_j$ is defined as in Algorithm~\ref{alg:label_model}). We also take $p_{\min}$ to be the smallest of the entries in $\bar{P}(Y)$. Let $\| \cdot \|$ be the spectral norm.

\begin{restatable}{theorem}{thmmuest}
Under the assumptions above, let $\hat{\mu}$ and $\hat{\phi}$ be estimates of the true  $\mu^*$ and $\phi^*$ produced with Algorithm~\ref{alg:label_model} with parameter reduction. Then,
\begin{align}
\E{}{ \| \hat{\mu} - \mu^* \| } \leq \sqrt{mT} \frac{24}{p_{\min} b^*_{\min}} \|B_{2T}^{-1}\|  \|M^\dagger\| \left( \sqrt{\frac{18 \log(12)}{n}} + \frac{2\log(12)}{n}\right). 
\label{eq:bd1}
\end{align}
The expectation $\mathbb{E} [ \| \hat{\phi} - \phi^* \| ]$ satisfies the bound \eqref{eq:bd1}, replacing $\sqrt{mT}$ with $mT$ and $B_2$ with $B_4$. 
\label{thm:sample_comp}
\end{restatable}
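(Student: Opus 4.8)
The plan is to bound the estimation error by tracking how sampling noise propagates through the four stages of Algorithm~\ref{alg:label_model}: (i) forming the empirical generalized agreement measures, (ii) solving the agreements-to-products system $M\ell = q$, (iii) solving the products-to-joints system $B_{2s_j} e = r$, and (iv) the Bayes normalization $\mu_j = e/\bar P(Y)$. Since each stage after the first is an explicit (pseudo)linear solve or an affine/Lipschitz map, the overall error factors as a product of per-stage amplification constants applied to a single base concentration bound, and the final $\sqrt{mT}$ arises from aggregating the $O(mT)$ per-parameter errors (under parameter reduction) into the spectral norm.

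First I would establish the base concentration bound. The only data-dependent quantities are empirical averages of products of bounded source labels, $\frac1n\sum_i \prod \lambda_k \lambda_\ell$, and each summand lies in $[-1,1]$ because $\lambda \in \{-1,0,1\}$. Applying the matrix Bernstein inequality to these i.i.d.\ bounded terms yields a bound of the form $\sqrt{18\log(12)/n} + 2\log(12)/n$, which is exactly the parenthesized factor in \eqref{eq:bd1}; the constants $12, 18, 2$ are read off from the variance proxy and the norm bound of the small dilation used in the inequality.

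Next I would propagate this error through the two linear systems and the intervening nonlinear steps. Forming $q = \log(\mathbb{E}[a_k a_\ell])^2$ linearizes the logarithm about its argument; since the assumption $|\mathbb{E}[a_j(U,V)]| \ge b^*_{\min} > 0$ bounds that argument away from zero, the logarithm is Lipschitz with constant controlled by $1/b^*_{\min}$, supplying the $1/b^*_{\min}$ factor. Solving $M\ell = q$ via the pseudoinverse amplifies the error by $\|M^\dagger\|$ (the system has one row per conditionally independent pair and is generally overdetermined, so $M^\dagger$ rather than $M^{-1}$ appears). Recovering $\mathbb{E}[a_j]$ and forming $\rho_j = \tfrac12(1+\mathbb{E}[a_j])$ are $1$-Lipschitz and do not worsen the rate. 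Solving the products-to-joints system amplifies by $\|B_{2s_j}^{-1}\| \le \|B_{2T}^{-1}\|$, using $s_j \le T$ together with the full-rank Kronecker/Hadamard structure of $B_{2s_j}$ established in the Appendix, and the final division by the prior contributes $1/p_{\min}$ since $p_{\min}$ lower-bounds the entries of $\bar P(Y)$. Collecting these factors and the numerical constants from the Lipschitz maps into the single constant $24$ yields the per-parameter bound; stacking the $O(mT)$ blocks (one per source and per task in its coverage, after parameter reduction) and bounding the spectral norm of the stacked error by $\sqrt{mT}$ times the per-block bound completes \eqref{eq:bd1}. The bound for $\hat\phi$ is structurally identical: correlation parameters are indexed by pairs of sources whose joint coverage can reach $2T$, so $B_{2T}$ is replaced by $B_{4T}$ and the larger parameter count replaces $\sqrt{mT}$ by $mT$.

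The main obstacle I expect is controlling the nonlinear log/exp inversion tightly against the lower bound $b^*_{\min}$ while simultaneously handling the overdetermined solve through $M^\dagger$ and the sign ambiguity it introduces: the squaring inside $q$ discards sign information, so one must argue that the separate sign-recovery assumption (positivity of $\sum_{\lambda_k \in S_j}\mathbb{E}[a_k(U,V)]$) resolves the correct branch with high probability, and that the errors across the many per-(source, subset) solves combine into a genuine spectral-norm bound rather than merely a sum of entrywise bounds. A secondary technical point is verifying that the empirical averages of $\prod\lambda_k\lambda_\ell$ are unbiased for the conditional-independence-reduced expectations in \eqref{eq:indeps}, so that matrix Bernstein applies cleanly.
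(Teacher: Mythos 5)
Your proposal follows essentially the same route as the paper's proof: a matrix Bernstein concentration bound on the observable product/configuration averages, propagated through the logarithm (Lipschitz via $b^*_{\min}$, using $(\log(1+x))^2 \leq x^2$), the $\|M^\dagger\|$ solve, the $\rho$/exponential map, the $\|B_{2T}^{-1}\|$ solve, and the $1/p_{\min}$ prior normalization, then aggregated over the $O(mT)$ per-source blocks to get $\sqrt{mT}$, with the same pair-based modifications ($mT$, $B_{4T}$) for $\phi$. The items you flag as obstacles but defer---population-level correctness/unbiasedness, full rank of $M$ (which the paper establishes via Odlyzko's theorem when there are more than three conditionally independent sources), and sign recovery (resolved because determining one accuracy's sign determines all others through the pairwise equations, under the assumption $\sum_{\lambda_k \in S_j} \mathbb{E}[a_k(U,V)] > 0$)---are precisely what the paper's ``population-level'' half of the proof supplies before the sampling analysis.
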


\paragraph{Interpreting the Theorem} The above formula scales with $n$ as $O(\sqrt{1/n})$, and critically, \emph{no more than linear in $T$}. We prove a more general bound without parameter reduction, which scales exponentially in $T$ in Appendix~\ref{sec:appendix}. A matching lower bound in $T$ requires $\Omega(2^{\frac{3}{2}T})$ samples. The expression scales with $m$ as $O(\sqrt{m})$ and $O(m)$ for estimating $\mu$ and $\phi$, respectively. The standard scaling factors for the random vectors produced by the sources are $m$ and $m^2$; however, we need \emph{only two additional sources for each source}, leading to the $\sqrt{m}$ and $m$ rates. The linear systems enter the expression only via $\|B^\dagger\|$. These are fixed; in particular, $\|B_2^\dagger\| = 1.366$ and $\|B_4^\dagger\| = 1.112$.

\paragraph{End Model Generalization}
After obtaining the label model parameters, we use them to generate probabilistic training labels for the resolution required by the end model.
The parameter error bounds from Theorem~\ref{thm:sample_comp} allow us to apply a result from \cite{Ratner19},
which states that under the common weak supervision assumptions (e.g., the parameters of the distribution we seek to learn are in the space of the true distribution), 
the generalization error for $Y$ satisfies $\E{}{ l(\hat{w},X, Y) -  l(w^*, X, Y) } \leq \gamma + 8 (\| \hat{\mu} - \mu^*\| + \|\hat{\phi} - \phi^*\|)$.
Here, $l$ is a bounded loss function and $w$ are the parameters of an end model $f_w : \mathcal{X} \rightarrow \mathcal{Y}$.
We also have $\hat{w}$ as the parameters learned with the estimated label model using $\mu$ and $\phi$, and 
$w^* = \text{argmin}_w l(w,X,Y)$, the minimum in the supervised case.
This result states that the generalization error for our end models \emph{scales with the amount of unlabeled data as $O(1/\sqrt{n})$}, the same asymptotic rate as if we had access to the true labels.

\section{Experimental Results}
\label{sec:exp}

We validate \sn{} on real-world sequential classification problems, comparing end model performance trained on labels from \sn{} and other baselines. 
\sn{} improves over traditional supervision and other state-of-the-art weak supervision methods by \tradimp{} and \dpimp{} F1 points on average in terms of end model performance, respectively. 
We also conduct ablations to compare parameter reduction techniques, the effect of modeling dependencies, and advantage of using a user-defined prior, with average improvements of \shareimp{}, \depsimp{}, and \priorimp{} F1 points, respectively. Finally, we show how our model scales with the amount of unlabeled data, coming within $0.1$ F1 points of a model trained on $686\times$ more ground-truth labels.

\begin{figure}
  \includegraphics[width=\columnwidth]{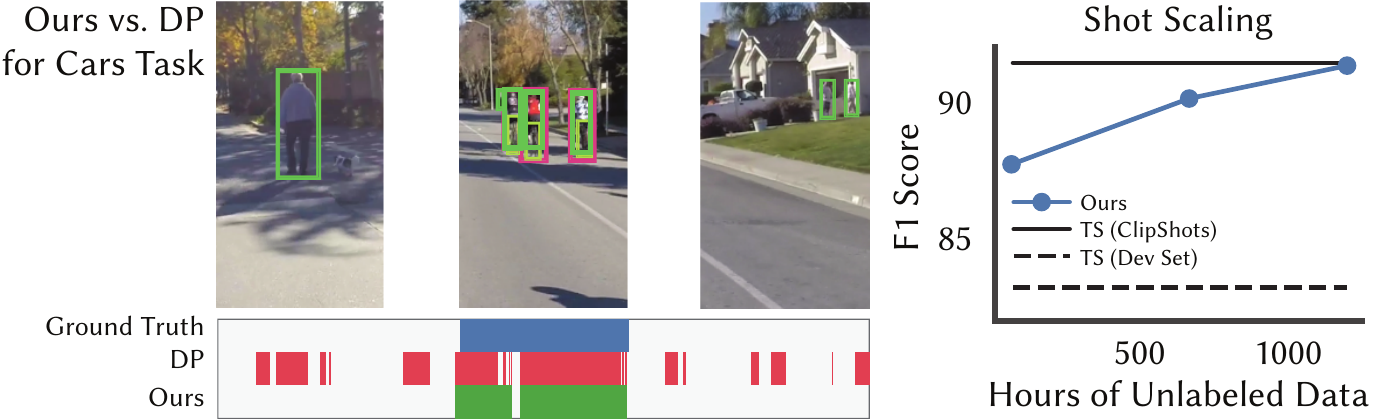}
    \caption{(Left) \sn{} has fewer false positives than data programming since it uses sequential correlations and distributional knowledge to assign better training labels.
    (Right) Increasing unlabeled data can help match a benchmark model trained with $686\times$ more ground truth data (Shot) and increase precision as needed for clinicians (BAV).}
  \label{fig:cars_scale_up}
\end{figure}

\subsection{Datasets}
We consider two types of tasks, spanning various modalities: (a) tasks that are expensive and slow to label due to the domain expertise required, and (b) previously studied, large-scale tasks with strong baselines often based on hand-labeled data developed over months. All datasets include a small hand-labeled \emph{development set} ($<10\%$ of the unlabeled data) used to tune supervision sources and end model hyperparameters. Results are reported on test set as the mean $\pm$ S.D. of F1 scores across 5 random weight initializations. See Appendix~\ref{sec:appendix} for additional task and dataset details, precision and recall scores, and end model architectures.

\paragraph*{Domain Expertise}
These tasks can require hours of expensive expert annotations to build large-scale training sets.
Bicuspid Aortic Valve \textbf{(BAV)}~\cite{fries2018weakly} is classifying a congenital heart defect over MRI videos from a population-scale dataset~\cite{sudlow2015uk}. Labels generated from \sn{} and sources based on characteristics like heart area and perimeter are \emph{validated by cardiologists}.
Interview Detection \textbf{(Interview)} identifies interviews of Bernie Sanders from TV news broadcasts; across a large corpus of TV news, interviews with Sanders are rare, so it requires significant labeling effort to curate a training set.
Freezing Gait \textbf{(Gait)} is ankle sensor data from Parkinson's patients and the task is to classify abnormal gait, using supervision sources over characteristics like peak-to-peak distance.
Finally, \textbf{EHR} consists of tagging mentions of disorders in patient notes from electronic health records. 
We only report label model results for \textbf{EHR}, but \sn{} improves over a majority vote baseline by 3.7 F1 points (Appendix~\ref{sec:appendix_results}).

\paragraph*{Large-Scale}
Movie Shot Detection \textbf{(Movie)} classifies frames that contain a change in scene using sources that use information about pixel values, frame-level metadata, and sequence-level changes. This task is well-studied in literature~\cite{tang2018dsm,hassanien2017deepsbd} but adapting the method to specialized videos requires manually labeling thousands of minutes of video. Instead, we use \emph{$686\times$} fewer ground truth labels and various supervision sources to \emph{match the performance of a model pre-trained on a benchmark dataset with ground truth labels} (Figure~\ref{fig:cars_scale_up}). \textbf{Basketball} operates over a subset of ActivityNet~\cite{caba2015activitynet} and uses supervision sources over frames and sequences. Finally, we use a representative dataset from a large automated driving company \textbf{(Cars)}~\cite{cydet} and show that \emph{we outperform their best baseline by $9.9$ F1 points}. 
The \textbf{Cars} end model is proprietary, so we only report label model
results (Appendix~\ref{sec:appendix_results}).


\begin{table}[htbp]
    \centering
    \small
    \begin{tabular}{@{}lccccccccc@{}}
        \toprule
        \multicolumn{1}{c}{} & \textbf{}     & \textbf{}  & \multicolumn{4}{c}{\textbf{End Model Performance}}                        & \multicolumn{3}{c}{\textbf{Improvement}} \\ \cmidrule(l){4-7} \cmidrule(l){8-10} 
        \textbf{Task}     & \textbf{prop} & \textbf{T} & \textbf{TS}     & \textbf{MV}    & \textbf{DP}    & \textbf{\sn}   & \textbf{TS}   & \textbf{MV}   & \textbf{DP}   \\ \midrule
        BAV                  & 0.07          & 5          & 22.1 $\pm$ 5.1  & 6.2 $\pm$ 7.6  & 53.2 $\pm$ 4.4 & \textbf{53.8 $\pm$ 7.6} & +31.7         & +47.6         & +0.6         \\
        Interview            & 0.03          & 5          & 80.0 $\pm$ 3.4  & 58.0 $\pm$ 5.3 &  8.7 $\pm$ 0.2 & \textbf{92.0 $\pm$ 2.2} & +12.0         & +34.0        & +83.3          \\
        Gait                 & 0.33          & 5          & 47.5 $\pm$ 14.9 & 61.6 $\pm$ 0.4 & 62.9 $\pm$ 0.7 & \textbf{68.0 $\pm$ 0.7} & +20.5         & +6.4          & +5.1          \\
        Shot                 & 0.10          & 5          & 83.2 $\pm$ 1.0  & 86.0 $\pm$ 0.9 & 86.2 $\pm$ 1.1 & \textbf{87.7 $\pm$ 1.0} & +4.5          & +1.7          & +1.5          \\
        Basketball           & 0.12          & 5          & 26.8 $\pm$ 1.3  & 8.1 $\pm$ 5.4  & 7.7 $\pm$ 3.3  & \textbf{38.2 $\pm$ 4.1} & +11.4         & +30.1         & +30.5        \\
        \bottomrule \\
        \vspace{-2em}
        \end{tabular}
        \caption{End model performance in terms of F1 score (mean $\pm$ std.dev). Improvement in terms of mean F1 score. prop: proportion of positive examples in the dev set, $T$: number of elements in a sequence. We compare end model performance on labels from labeled dev set (TS), majority vote across sources (MV), and data programming (DP) and outperform each across all tasks.}
    \label{table:baseline}
\end{table}

\subsection{Baselines}
For the tasks described above, we compare to the following baselines (Table~\ref{table:baseline}): \textit{Traditional Supervision (TS)} in which end models are trained using the hand-labeled development set; \textit{Non-sequential Majority Vote (MV):} in which we force all supervision sources assign labels per-element, and calculate training labels by taking majority vote across sources; and \textit{Data Programming (DP)~\cite{Ratner16}:} a state-of-the-art weak supervision technique that learns the accuracies of the sources but does not model sequential correlations.

In tasks with domain expertise required, our approach improves over traditional supervision by up to $36.8$ F1 points and continually improves precision as we add unlabeled data, as shown in Figure~\ref{fig:cars_scale_up}.
Large-scale datasets have manually curated baselines developed over \emph{months}; \sn{} is still able to improve over baselines by up to $30.5$ F1 points by capturing sequential relations properly --- as shown in Figure~\ref{fig:cars_scale_up}, only modeling source accuracies (DP) can fail to take into account the distribution prior and sequential correlations among sources that can help filter false positives, which \sn{} does successfully. 

\subsection{Ablations}
We demonstrate how each component of our model is critical by comparing end model performance trained on labels from \sn{} without any sequential dependencies, \sn{} without parameter sharing for sources with shared logic (Section~\ref{subsec:tying}), and \sn{} with various distribution priors: user-defined, development-set based, and uniform. We report these comparisons in Appendix~\ref{sec:appendix} and summarize results here. 

Without sequential dependencies, end model performance worsens by \depsimp{} F1 points on average, highlighting the importance of modeling correlations among sources. We see that sharing parameters among sources that use the same logic to assign labels at the same resolution performs \shareimp{} F1 points better on average.
Using a user-defined distribution prior improves over using a uniform distribution prior by \priorimp{} F1 points and a development-set based distribution prior by \priorimpdev{} F1 points on average, highlighting how domain knowledge in forms other than supervision sources is key to generating high quality training labels. 

\section{Related Work}
\label{sec:rel}

Our work is related to several weak supervision techniques such as traditional distant supervision~\cite{mintz2009distant,craven1999constructing,hoffmann2011knowledge,takamatsu2012reducing}, co-training methods~\cite{blum1998combining}, pattern-based supervision~\cite{gupta2014improved} and feature annotation techniques~\cite{mann2010generalized,zaidan2008modeling,liang2009learning}. Recent works also use generative models~\cite{ratner2016data,Ratner18,bach2017learning} and other methods~\cite{guan2018said,khetan2017learning} to integrate these noisy sources. However, these approaches do not handle sequential correlations or multi-resolution sources and require expensive sampling-based techniques that can lead to non-identifiability. One recent approach directly models weak supervision sources using deep generative models for trajectory data, but does not use weak supervision sources to label training data for arbitrary end models~\cite{zhan2019sequentialws}. Our work is also related to recent techniques for estimating classifier accuracies without labeled data in the presence of structural constraints~\cite{platanios2017estimating}.
Our work is also related to crowdsourcing~\cite{karger2011iterative,dawid1979maximum}, specifically to spectral and method of moments-based approaches~\cite{zhang2016spectral,dalvi2013aggregating,ghosh2011moderates,anandkumar2014tensor}. Our work focuses on the settings that is not covered by crowdsourcing, such as multi-resolution sources, sequential correlation structures, and regimes in which a small number of labelers, or sources, assign noisy labels to a large set of datapoints. We also theoretically characterize how the end model trained on labels from noisy sources generalizes. 


\section{Conclusion}
\label{sec:conc}

We propose \sn{}, the first weak supervision framework that integrates multi-resolution weak supervision sources including complex dependency structures to assign probabilistic labels to training sets without using any hand-labeled data. We prove that our approach can uniquely recover the parameters associated with supervision sources under mild conditions, and that the sample complexity of an end model trained using noisy sources matches that of supervised approaches.
Experimentally, we demonstrate that \sn{} improves over traditional supervision by \tradimp{} F1 points and existing weak supervision approaches by \dpimp{} F1 points for real-world classification tasks training over large, population-scale biomedical repositories like UKBiobank~\cite{sudlow2015uk} and industry-scale video datasets for self-driving cars.

\section*{Acknowledgments}
\noindent

We gratefully acknowledge the support of DARPA under Nos. FA87501720095 (D3M), FA86501827865 (SDH), and FA86501827882 (ASED); NIH under No. U54EB020405 (Mobilize), NSF under Nos. CCF1763315 (Beyond Sparsity), CCF1563078 (Volume to Velocity), and 1937301 (RTML);  ONR under No. N000141712266 (Unifying Weak Supervision); the Moore Foundation, NXP, Xilinx, LETI-CEA, Intel, IBM, Microsoft, NEC, Toshiba, TSMC, ARM, Hitachi, BASF, Accenture, Ericsson, Qualcomm, Analog Devices, the Okawa Foundation, American Family Insurance, Google Cloud, Swiss Re, 
Brown Institute for Media Innovation,
the National Science Foundation (NSF) Graduate Research Fellowship under No. DGE-114747, Joseph W. and Hon Mai Goodman Stanford Graduate Fellowship,
Department of Defense (DoD) through the National Defense Science and
Engineering Graduate Fellowship (NDSEG) Program, 
and members of the Stanford DAWN project: Teradata, Facebook, Google, Ant Financial, NEC, VMWare, and Infosys. The U.S. Government is authorized to reproduce 
and distribute reprints for Governmental purposes notwithstanding any copyright notation thereon. Any opinions, findings, and conclusions or recommendations expressed in this material are those of the authors and do not necessarily reflect the views, policies, or endorsements, either expressed or implied, of DARPA, NIH, ONR, or the U.S. Government.

\bibliographystyle{abbrv}
\bibliography{time}

\newpage
\appendix

First we include a glossary of the terminology and notation used throughout this paper for ease of reference. Afterwards, we provide our theoretical analysis, and extended theorem statement, proofs, and more details on model identifiability. Lastly, we include additional details and experiments.

 \section{Glossary}
The glossary is given in Table~\ref{table:glossary} below.

\begin{table*}[h]
\centering
\begin{tabular}{l l}
\toprule
Symbol & Used for \\
\midrule
$\x$ & Unlabeled data sequence, $\x = [X_1, X_2, \ldots, X_T] \in \mathcal{X}$ \\
$T$ & Length of the unlabeled data sequence \\
$n$ & Number of data sequences \\
$\mathcal{T}$ & Task indices \\
$\y$ & Latent, ground-truth label vector, $\y = [y_1, y_2, \ldots, y_T, y_{T+1}, \ldots, y_{|\mathcal{T}|}] \in \mathcal{Y}$  \\
$y_i$ & Ground-truth label for $i$th task, $y_i \in \{-1,1\}$  \\
$\mathcal{D}$ & Distribution from which we assume $(\x, \y)$ data points are sampled i.i.d. \\
$r$ & Resolution level. $r=1$ refers to resolution level in which each of the $T$ elements is labeled\\
$R_r \subseteq \mathcal{T}$ & Set of task indices that are at resolution $r$\\
$G_{\text{task}}$ & Task dependency graph describing the correlation structure among tasks in a graph\\
$m$ & Number of sources \\
$\lf_i$ & Output of $S_j$ for $\x$, $\lf_i \in \{-1,1,0\}$ \\
$\tau_ j$ & Coverage set of $\lf_j$ - the task indices $\tau_j\subseteq\mathcal{T}$ for which $S_j$ can label. For $S_j$ operating at resolution $r$, \\
& $\tau_j\subseteq R_r$\\
$s_j$ & Size of the $j$th source coverage set, $s_j = |\tau_j|$ \\
$G_{\text{source}}$ & Source dependency graph that describe the correlation structure among source, \\
 & particularly for correlations due to shared inputs \\
$G$ & Full dependency graph, $G = (V,E)$ obtained by combining $G_{\text{source}}$ and $G_{\text{task}}$. \\
&$V=\{\lf_1, \ldots ,\lf_m\} \cup \{y_1, \ldots, y_{|\mathcal{T}|}\}$ \\
$\mu_j$ & Accuracy parameter for source $j$; $\mu_j(Z, W) = P \left( \lambda_j = Z  \text{ }|\text{ } \ Y_{\tau_j} = W \right)$ \\
$\phi_{j,k}$ & Correlation parameter for sources $j,k$; \\
& \qquad $\phi_{j,k}(Z_1, Z_2, W) =  P \left( \lambda_j = Z_1, \lambda_k = Z_2  \text{ }|\text{ } \ Y_{\tau} = W \right)$ \\
$\bar{P}_Y$ & Class prior for the $Y$ label vector \\
$a_{j}$ & Generalized agreement measure; $a_{j}(U, V) = \prod \lambda_{j}(U) \prod Y(V)$; \\
& \qquad Products are observable for common $V$ \\
$U,V$ & Subsets of the coverage set $\tau_j$ \\
$M$ & Matrix for first linear system, each row encodes pairs of agreements that factorize \\
$q$ & Observable vector with $\E{}{\lambda_j(U) \lambda_{k}(U')}$ terms \\
$\ell$ & Solution for products variable system system $M \ell = q$; \\
& \qquad Contains the terms $ \log \E{}{a_k(U',V)}^2$ \\
$\rho_{j}$ & Product variable obtainable from generalized agreement; \\ 
& \qquad $\rho_{j}(U,V) = P(a_j(U,V) = 1) = \frac{1}{2} + \frac{1}{2} \E{}{ a_j(U,V)}$ \\
$p_j$ & Joint distribution for source $j$ and $Y_{\tau_j}$ ;$p_{j}(Z,W)  = P ( \lambda_j = Z , Y_{\tau_j} = W)$ \\
$B_{2s_j}$ & Products-to-joints transformation matrix \\ 
$r$ &  Vector containing the $\rho_j(U,V)$; is estimated after products variable system is solved \\ 
$e$ &Vector containing the $p_{j}(Z,W)$, solution to products-to-joints system $B_{2s_j} e = r$  \\
\end{tabular}
\caption{
	Glossary of variables and symbols used in this paper.
}
\label{table:glossary}
\end{table*}

\section{Proofs and Extended Theoretical Analysis}

We give more details on the theoretical results we provided in the body. We start by providing the proof of Theorem~\ref{thm:sample_comp}. Afterwards, we discuss model identifiability, expressing tradeoffs involving multi-resolution models. Finally, we provide further detail on simulations and how to access conditional independencies from graphs.

First, we begin with a proof of Theorem~\ref{thm:sample_comp}. The following lemma will be useful. We use a little bit of notation. Let $D = (d_1, \ldots, d_t)$ be a random vector in $\{-1, +1\}^t$. For particular vectors $U,V,Z \in \{-1, +1\}^t$, we write $p_D(Z) = P(d_1 = z_1, \ldots, d_t = z_t)$ and $\rho_D(U) = P(\prod_U D_U = 1)$. The $p$ term is a joint probability and the $\rho$ term is a product probability. Let 
\[B_{t} =  \frac{1}{2} \begin{bmatrix} 1 & 1 \\ 1 & -1 \end{bmatrix} \otimes^{t} + \frac{1}{2} 11^T.\] 
Here, $1$ is the all $1$'s vector and $A \otimes^k$ represents taking the Kronecker product $A \otimes A$ a total of $k$ times.

Let the vector $e$ contain the $2^t$ entries $p_D(Z)$, with $Z$ taken in the following order. $z_t = +1$ for the first $2^{t-1}$ entries and $-1$ for the latter $2^{t-1}$ entries, $z_{t-1} = +1$ for the first $2^{t-2}$ entries, and so on, so that $z_1$ alternates between $+1$ and $-1$. Similarly, let the vector $r$ contain the $2^t$ choices of $\rho_D(U)$ running over all $2^t$ subsets of $\{1,\ldots, t\}$. We write $\rho_D(\emptyset) = 1$. Then, the ordering of the entry in $r$ is similar to the ones in $e$: the first half of the $U$ terms in $\rho_D$ do not contain the entry $t$, the latter half do, and so on, so that every alternating entry contains the entry $1$. Then,

\begin{lemma}
With the setup above, $B_{t} e = r$.
\label{lem:b}
\end{lemma}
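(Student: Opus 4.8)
The plan is to prove the identity entrywise: I would match each coordinate of $B_t e$, indexed by a subset $U$, to the corresponding product probability $\rho_D(U)$. The underlying intuition is that $B_t$ is (up to the rank-one correction $\tfrac12 11^T$) the Hadamard transform that relates a joint distribution to its moments, and product probabilities are just affinely shifted parities of $D$. The proof is thus a short direct computation once the indexing is pinned down.

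First I would record the entry formula for the Kronecker power of $H = \begin{bmatrix} 1 & 1 \\ 1 & -1 \end{bmatrix}$. Under the prescribed big-endian orderings — rows indexed by subsets $U \subseteq \{1,\ldots,t\}$ with membership of $t$ most significant, columns indexed by $Z \in \{-1,+1\}^t$ with $z_t$ most significant — I claim $(H^{\otimes t})_{U,Z} = \prod_{i\in U} z_i$. This is the standard Hadamard entry formula; it follows by induction from the base case $t=1$ (where the two rows $[1,1]$ and $[1,-1]$ are exactly $\prod_{i\in\emptyset} z_i = 1$ and $\prod_{i\in\{1\}} z_i = z_1$) together with the multiplicative law $(A\otimes B)_{(a_1,b_1),(a_2,b_2)} = A_{a_1,a_2}B_{b_1,b_2}$, since each Kronecker factor contributes $z_i$ when $i\in U$ and $1$ otherwise. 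Confirming that the stated orderings of $e$ and $r$ line up with the factor order of the Kronecker product (both running from coordinate $t$ down to $1$) is the one place requiring care; it is purely combinatorial bookkeeping, and I expect it to be the main obstacle only in the sense of being the fiddliest part.

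Given this, each entry of $B_t = \tfrac12 H^{\otimes t} + \tfrac12 11^T$ satisfies $(B_t)_{U,Z} = \tfrac12 \prod_{i\in U} z_i + \tfrac12$, using $(11^T)_{U,Z} = 1$. I would then compute the $U$-th coordinate of $B_t e$ directly:
\begin{align*}
(B_t e)_U = \sum_{Z} \Big(\tfrac12 \prod_{i\in U} z_i + \tfrac12\Big) p_D(Z) = \tfrac12 \sum_Z p_D(Z) \prod_{i\in U} z_i + \tfrac12 \sum_Z p_D(Z).
\end{align*}
The second sum equals $\tfrac12$ since $p_D$ is a probability distribution, and the first sum is $\tfrac12\, \mathbb{E}\!\left[\prod_{i\in U} d_i\right]$ by expanding the expectation over the $2^t$ outcomes $Z$.

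Finally I would match this to $\rho_D(U)$. For $U \neq \emptyset$, since $\prod_{i\in U} d_i \in \{-1,+1\}$ we have $\rho_D(U) = P(\prod_{i\in U} d_i = 1) = \tfrac12\big(1 + \mathbb{E}[\prod_{i\in U} d_i]\big)$, which is exactly $(B_t e)_U$. For $U = \emptyset$ the empty product is $1$, so $(B_t e)_\emptyset = \tfrac12 + \tfrac12 = 1 = \rho_D(\emptyset)$ under the stated convention. Since the coordinates agree for every $U$, I conclude $B_t e = r$. Apart from the indexing correspondence in the first step, the argument is a one-line expectation expansion.
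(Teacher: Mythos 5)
Your proof is correct, but it takes a genuinely different route from the paper's. The paper proves the lemma by induction on $t$: it decomposes $B_{t+1}$ into the block form $\begin{bmatrix} B_t & B_t \\ B_t & \bar{B_t}\end{bmatrix}$, splits $e_{k+1}$ and $r_{k+1}$ accordingly, and argues via the law of total probability (top block) and a case analysis on the sign of the product over $U \cup \{t+1\}$ (bottom block). You instead compute $B_t e$ entrywise, using the Hadamard entry formula $(H^{\otimes t})_{U,Z} = \prod_{i \in U} z_i$ to reduce each coordinate to $\tfrac{1}{2}\bigl(1 + \mathbb{E}\bigl[\prod_{i\in U} d_i\bigr]\bigr)$, and then invoke the identity $P\bigl(\prod_{i\in U} d_i = 1\bigr) = \tfrac{1}{2}\bigl(1 + \mathbb{E}\bigl[\prod_{i\in U} d_i\bigr]\bigr)$ for $\pm 1$-valued parities. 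Both arguments hinge on the same indexing conventions (big-endian, coordinate $t$ most significant, which you correctly verify is compatible with the Kronecker factor order). Your approach buys a closed-form entry formula for $B_t$ and makes transparent \emph{why} the matrix works: it is the moment (Hadamard) transform shifted to convert signed expectations into probabilities --- indeed your intermediate identity is precisely the relation $\rho_j(U,V) = \tfrac{1}{2}\bigl(1+\E{}{a_j(U,V)}\bigr)$ the paper uses elsewhere, so your proof unifies the lemma with that step. The paper's induction, by contrast, avoids establishing the Hadamard entry formula and stays entirely combinatorial, at the cost of a block-decomposition bookkeeping step whose final case analysis is argued only verbally.
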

\begin{proof}
We prove the result by induction on $t$. For the base case we take $t=1$. Then, using the above formula for $B_1$, we must have the following, which is clearly true 
\begin{align*}
\begin{bmatrix} 
1 & 1  \\
1 & 0 
\end{bmatrix} 
\begin{bmatrix} p_D([1]) \\ p_D([-1]) \end{bmatrix}= 
\begin{bmatrix} 
\rho_D(\emptyset) \\ \rho_D(\{1\})
\end{bmatrix}.
\end{align*}

Next, we assume the result holds for $t=k$, and we show it is true for $t=k+1$. That is, we have $B_t e_k = r_k$, and we'd like to show that $B_{t+1} e_{k+1} = r_{k+1}$. It follows from the definition of $B_{t+1}$ that it can be decomposed as 
\[B_{t+1}  = 
\begin{bmatrix} 
B_t & B_t  \\
B_t & \bar{B_t} 
\end{bmatrix} ,
\]
where the bar indicates flipped 1's and 0's. Furthermore, from our ordering, we have that $e_{k+1}$ can be written as $[e_k \cap (d_{t+1} = 1) , e_k \cap (d_{t+1} = -1)]^T$, where we augment each probability term in $e_k$ with either $d_{t+1} = 1$ or $d_{t+1} = -1$. Similarly, we have that $r_{k+1} = [r_k ; r_k \cup {d_{t+1}}]^T$. Then, what we want to show, $B_{t+1} e_{k+1} = r_{k+1}$, is equivalent to
\begin{align*}
\begin{bmatrix} 
B_t & B_t  \\
B_t & \bar{B_t} 
\end{bmatrix} 
\begin{bmatrix}e_k \cap (d_{t+1} = 1) \\ e_k \cap (d_{t+1} = -1)\end{bmatrix}= 
\begin{bmatrix} 
 r_k \\ r_k \cup {d_{t+1}}
\end{bmatrix}.
\end{align*}

The result follows almost immediately. For the block of $r_{k+1}$ on the top, we are summing the $B_t$ including both cases $d_{t+1} = 1$ and $d_{t+1} = -1$, which sums up to $r_k$ using the law of total probability and the inductive hypothesis. For the bottom block, we are summing over the probabilities of cases where $d_{t+1} = 1$ and the other terms in each $U$ multiply to 1, along with those with $d_{t+1} = -1$ and the others multiplying to $-1$, which gives all the cases where the terms in $U \cup \{t+1\}$ multiply to 1, which is indeed the lower subvector on the right. Thus we are done.
\end{proof}

Now we are ready for the proof of Theorem~\ref{thm:sample_comp}. Recall our assumptions: we see $n$ samples for each of the $m$ sources for sequences of length $T$, and we have the graph structure $G = (V,E)$. Our coverage sets are of length up to $T$. We have sufficiently many conditionally independent sources, and also that $\forall j, | \mathbb{E} [ a_j(U,V) ] | \geq b^*_{\text{min}} > 0$. Finally, we assume that sign recovery is possible. One way to have this is to require that $ \forall j,U,V$,  $\sum_{\lambda_k \in S_j} \mathbb{E} [ a_k(U,V) ] > 0$ where $S_j$ is defined as in Algorithm~\ref{alg:label_model}. 

With this, we prove a more general statement, without parameter reduction, and then we show how to obtain from it the parameter reduction case in Theorem~\ref{thm:sample_comp}.
\begin{restatable}{theorem}{thmext}
Under the assumptions previously described, let $\hat{\mu}$ and $\hat{\phi}$ be estimates of the true  $\mu^*$ and $\phi^*$ produced with Algorithm~\ref{alg:label_model}. Then,
\begin{align}
\E{}{ \| \hat{\mu} - \mu^* \| } \leq 2^{2T} \sqrt{m} \frac{6}{p_{\min}b^*_{\min}} \|B_{2T}^{-1}\|  \|M^\dagger\| \left( \sqrt{\frac{18 \log(6\times 2^T)}{n}} + \frac{2\log(6 \times 2^T)}{n}\right). 
\label{eq:bdt}
\end{align}
The expectation $\E{}{\norm{ \hat{\phi} - \phi^*}}$ satisfies the bound \eqref{eq:bd1}, replacing $\sqrt{m}$ with $m$, $B_{2T}$ with $B_{4T}$, and $2^T$ with $2^{2T}$. 

Moreover, $\|M^\dagger\| = 1$, and with parameter tying, ~\eqref{eq:bdt} reduces to the expression in Theorem~\ref{thm:sample_comp}.
\label{thm:sample_comp_big}
\end{restatable}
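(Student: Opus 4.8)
The plan is to track how estimation error propagates through the chain of operations in Algorithm~\ref{alg:label_model}, from the only quantities actually estimated from data---the empirical generalized-agreement products---down to the final parameters $\hat\mu$ (and, analogously, $\hat\phi$). Algorithm~\ref{alg:label_model} is the composition of a single statistical step, forming empirical expectations of the observable products $\prod \lambda_k(U_k)\lambda_\ell(U_\ell)$, followed by a fixed sequence of deterministic maps: the $\log(\cdot)^2$ transform, the linear solve $M\ell = q$, exponentiation back to $\mathbb{E}[a_j(U,V)]$ together with sign recovery, the affine map $\rho_j = \tfrac12 + \tfrac12\mathbb{E}[a_j]$, the products-to-joints solve $B_{2s_j}e = r$ (invertible by Lemma~\ref{lem:b}), and Bayes' division $\mu_j = p_j/\bar P(Y)$. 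Because every map after the first is deterministic, it suffices to bound the error of the statistical step and then multiply by the Lipschitz or operator-norm factor of each subsequent map.

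First I would bound the statistical error. Each empirical product $\widehat{\mathbb{E}}[a_k(U_k,V)a_\ell(U_\ell,V)]$ is an average of $n$ i.i.d.\ variables bounded in $[-1,1]$, so a matrix/vector Bernstein inequality gives a deviation of the form $\sqrt{18\log(6\times 2^T)/n} + 2\log(6\times 2^T)/n$, where the $2^T$ inside the logarithm is the union bound over the $\le 2^T$ coverage subsets $U,V\subseteq\tau_j$ that enter the $M$-system for a fixed source. This is exactly the parenthetical factor in \eqref{eq:bdt}, and its leading $\sqrt{1/n}$ term gives the claimed $O(1/\sqrt n)$ rate.

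Next I would pass this error through the deterministic maps and collect constants. Taking logs is Lipschitz away from $0$ with constant controlled by $1/b^*_{\min}$, using the assumption $|\mathbb{E}[a_j(U,V)]|\ge b^*_{\min}$; the linear solve contributes $\|M^\dagger\|$, which a direct computation on the $\pm1$ incidence structure of $M$ (the $3\times3$ block with singular values $\{2,1,1\}$) shows equals $1$. Exponentiation back and sign recovery rely on the assumption $\sum_{\lambda_k\in S_j}\mathbb{E}[a_k(U,V)]>0$: on the concentration event the empirical agreement sums keep the correct sign, so conditioned on that event the residual error is purely in magnitude and is already controlled. The affine map to $\rho_j$ contributes $\tfrac12$, the products-to-joints solve contributes $\|B_{2s_j}^{-1}\|\le\|B_{2T}^{-1}\|$, and Bayes' division contributes $1/p_{\min}$; multiplying these yields the constant $6/(p_{\min}b^*_{\min})$.

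Finally I would aggregate over sources and subsets. Summing the squared per-coordinate errors over the $\le 2^{2s_j}\le 2^{2T}$ parameters of each source and over the $m$ sources, then taking the square root for $\|\hat\mu-\mu^*\|$, produces the $2^{2T}\sqrt m$ prefactor; the key point is that only two extra conditionally independent sources are needed per source, so the number of estimated products grows linearly, not quadratically, in $m$, giving $\sqrt m$ rather than $m$. The $\phi$ bound is identical except that each correlation term is a product $\lambda_i\lambda_j$ over the union $\tau_i\cup\tau_j$ of size $\le 2T$, which replaces $B_{2T}$ by $B_{4T}$, $2^T$ by $2^{2T}$ in the logarithm, and---through pair counting---$\sqrt m$ by $m$. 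To recover Theorem~\ref{thm:sample_comp} I would specialize to parameter reduction: conditioning each accuracy on one task at a time shrinks the per-source parameter count from $2^{2T}$ to $4s_j\le 4T$ (turning the prefactor into $\sqrt{mT}$ and $6$ into $24$), collapses every products-to-joints solve to the fixed $4\times4$ matrix $B_2$, and reduces the union bound to $\log(6\times 2)=\log 12$. The main obstacle is the nonlinear $\log$/$\exp$ pair combined with sign recovery: controlling the Lipschitz amplification near the threshold $b^*_{\min}$ and certifying correct sign recovery on the concentration event is the delicate part, since naive chaining either blows up the constant or fails when some $\mathbb{E}[a_j(U,V)]$ is close to zero.
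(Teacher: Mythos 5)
Your overall strategy---bound the statistical error of the observable product expectations once, then propagate it through the deterministic pipeline (the $\log(\cdot)^2$ transform, the $M$-solve, exponentiation with sign recovery, the map to $\rho$, the $B$-solve, and Bayes division) while collecting Lipschitz and operator-norm factors---is exactly the structure of the paper's proof, and your identification of the individual factors ($1/b^*_{\min}$ from the log step, $\|M^\dagger\|=1$ from the singular values $\{2,1,1\}$, $\|B_{2T}^{-1}\|$, $1/p_{\min}$) matches the paper's. However, there are two concrete gaps. First, your bookkeeping does not reproduce the stated bound. You claim a per-coordinate Bernstein deviation of $\sqrt{18\log(6\times 2^T)/n} + 2\log(6\times 2^T)/n$ with no dimension prefactor, and then obtain the leading factor by summing squared per-coordinate errors over the $\le 2^{2T}m$ parameters and taking a square root; that computation yields $\sqrt{2^{2T}m}\,=2^T\sqrt{m}$, not $2^{2T}\sqrt{m}$. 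In the paper the factor $2^{2T}=o^2$ (with $o=2^T$) arises differently: the concentration step is a \emph{matrix} Bernstein bound on the $3o\times 3o$ second-moment matrix $O^*$ of configuration indicators, giving $\mathbb{E}\|\Delta_O\| \le \sqrt{18o^2\log(6o)/n}+2o\log(6o)/n$ (one factor of $o$ from the variance term), and a second factor of $o$ accumulates through the norm chaining $\|b^*-\hat b\|\le\sqrt{3o}\,\|\Delta_O\|$ and $\|r^*-\hat r\|\le\sqrt{o}\,\|\hat\rho-\rho^*\|_\infty$; relatedly, the $\log(6\times 2^T)$ is the matrix-Bernstein dimension factor, not a union bound over coverage subsets. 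As written, your scalar-Bernstein-plus-union-bound route either proves a different (possibly stronger) bound that would need its own careful verification of how coordinate errors mix through the two linear solves, or it simply cannot produce the $2^{2T}$ prefactor the way you describe.

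Second, the error-propagation argument presupposes the population-level half of the theorem: that feeding \emph{exact} expectations through the pipeline returns exactly $\mu^*$ (and $\phi^*$), and uniquely so. The paper proves this separately before any perturbation analysis: conditional independence makes the agreement expectations factorize, so the true accuracies solve the $M$-system; $M$ is full rank not only in the $3\times 3$ case you compute but for any number $c>3$ of conditionally independent sources (via Odlyzko's theorem on $0/1$ constant-row-sum matrices); sign identification propagates from the assumption $\sum_{\lambda_k\in S_j}\mathbb{E}[a_k(U,V)]>0$ through the pairwise equations; and $B_{2s_j}$ is invertible by a Kronecker-rank argument. Note in particular that Lemma~\ref{lem:b}, which you cite for invertibility, only establishes the identity $B_{2s_j}e=r$; invertibility of $B_{2s_j}$ is a separate claim requiring the rank argument. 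Your sketch handles sign recovery on the concentration event but otherwise takes this uniqueness for granted; without it, the quantity $\|\hat\mu-\mu^*\|$ is not yet tied to the output of Algorithm~\ref{alg:label_model}, so the chain of Lipschitz bounds has nothing to anchor to.
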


\begin{proof}
There are two steps to the proof. First, we must show that the true parameters $\mu^*$ and $\phi^*$ are produced by Algorithm~\ref{alg:label_model} when we have access to the true, population-level joint probabilities of the sources. Afterwards, we compute the noisy version due to sampling error.

\paragraph{Population-Level Result}
There are two necessary results: first, we need to show that the true parameters are solutions to the second system, and, secondly, that they are the unique solutions. We start with the first system. We work over each source $j$ and some fixed $U,V$ coverage sets for $\lambda_j$ and $Y$. From the algorithm, we have a set of sources $S_j \in V(G)$ with $c = |S_j \cup \{j\}| \geq 3$ so that $\lambda_j(U)$ and $\lambda_{k}(U_k)$ are independent conditioned on $\prod Y(V)$, and likewise for each pair of sources in $S_j$ over their corresponding $U$'s. For simplicity, we take $c=3$ exactly, but it is easy to solve larger systems, and the proof below does not depend on the value of $c$. We then say we have $S_j \cup \{j\} = \{j,k,f\}$, that is, our sources are $\lambda_j, \lambda_k, \lambda_f$.

We formulate the resulting matrix $M$. Recall that each row of $M$ corresponds to an equation 
\[\log \E{}{ a_j(U,V)}^2 + \log \E{}{a_k(U',V)}^2 = \log \E{} {\prod_U \lambda_j(U) \lambda_{k}(U) }^2.\]
We have 3 such equations, for the pairs $(j,k), (j,f)$, and $(k,f)$. Then, our linear system is $M\ell = q$, given by
\begin{align}
\begin{bmatrix} 
1 & 1 & 0 \\
1 & 0 & 1  \\
0 & 1 & 1  \\
\end{bmatrix} 
\begin{bmatrix} \log \E{}{ a_j(U,V)}^2 \\ \log \E{}{ a_k(U',V)}^2 \\ \log \E{}{ a_f(U'',V)}^2 \end{bmatrix} = \begin{bmatrix} 
\log \E{} {\prod_{U,U'} \lambda_j(U) \lambda_{k}(U') }^2 \\  
\log \E{} {\prod_{U,U''} \lambda_j(U) \lambda_{f}(U'') }^2\\ 
\log \E{} {\prod_{U',U''} \lambda_k(U') \lambda_{f}(U'') }^2  
\end{bmatrix}.
\label{eq:system3}
\end{align}

The $3 \times 3$ matrix above is full-rank, so that we can obtain the unique solution---the vector of $\E{}{a_j(U,V)}^2$ terms. Note that the above easily extends for more than $3$ such equations. In the case of $c > 3$ sources, the matrix has $\binom{c}{2}$ rows, each with exactly two $1$'s. The resulting matrix is also full-rank. To see this, we apply a result of A. M. Odlyzko \cite{Odl} that states for 0/1, constant row-sum matrices (sum is $2$ in our case), with $c>4$, $\binom{c-1}{2}+1$ distinct rows always guarantee that the matrix is full rank. For $c=4$, Odlyzko's result requires $2\binom{c-2}{(c-2)/2)} + 1 = 5$ distinct rows, and we have $\binom{4}{2} = 6$, so this case works as well.

The only remaining step for the first system is to deal with identifiability. The above allows us to get the squares of the $\E{}{a_j(U,V)}$ terms. We thus need to recover their signs by using the sign recovery assumption. One obvious approach is to require that each of our sources has accuracy that satisfies $\E{}{a_j(U,V)} > 0$. However, much milder assumptions are possible:
Note that once we know the sign of a single source accuracy, say $k$, we get all others, since for each source $f$, we have an equation for each pair $(k,f)$. So in fact, as mention in the assumptions, the much weaker requirement that $\sum_{\lambda_k \in S_j} \mathbb{E} [ a_k(U,V) ] > 0$ is sufficient. There are other potential variants as well.

Now, by running the above procedure for all sources $j$ and all the $U,V$'s, we are ready to form the second system. We apply Lemma~\ref{lem:b} with $t = 2s_j$ and $D = (\lambda_j, Y_{\tau_j})$ obtaining that $B_{2s_j} e = r$.

To solve uniquely, we need to show that $B_{2s_j}$ is also full-rank. Note that the rank is not affected by adding a constant to each entry, unless it produces a 0 row, which it does not in this case, since the Hadamard matrix here was selected to have no all $-1$ rows. The Kronecker product of matrices multiplies the corresponding ranks. Since $\begin{bmatrix} 1 & 1 \\ 1 & -1 \end{bmatrix}$ is full-rank, $B_{2s_j}$ must be as well. Thus, there is a unique solution to our system, and it is indeed the desired $p_{j}(Z,W)$'s. Moreover, we can uniquely recover the $\mu$ parameters as well, as long as we know the distribution of $Y$. Finally, the same logic applies to the $\phi$ parameters, concluding the argument for the population-level result.

\paragraph{Sampling-Level Result}
Now we apply a matrix concentration inequality to bound the sampling error. First, we require more notation. 

For the first system, we'd like to estimate terms like $\log \E{} {\prod_U \lambda_j(U) \lambda_{k}(U')}^2$. Again, say we are working with three sources $j,k,f$. Let us say they all work with the same coverage subset $U$ of maximal size $T$, which is an upper bound for our general case. 

Then, for $j$ we create $o = 2^{T}$ indicator variables, one for each configuration of $\lambda_j(U)$. 
Call these variables $c_{j,1}, c_{j,2}, \ldots, c_{j,o}$, and likewise for $k$ and $l$. For example, if $T = 2$, then $c_{j,1}, \ldots, c_{j,4}$ correspond to $\one \{\lambda_{j} = [-1, -1]\}, \one \{\lambda_{j} = [-1, 1]\}$, and so on.

We stack these vectors together to form the vector $c$ of length $3o$, and we estimate the matrix $O^* = \E{}{cc^T}$. We do this by estimating $c^1, c^2, \ldots, c^n$ from our samples $\lambda^1, \ldots, \lambda^n$, filling in the indicators accordingly. Then, we use the estimate $\hat{O} = \frac{1}{n} \sum_{i=1}^n c^i (c^i)^T$. Our first step is obtaining a bound on $\|\Delta_O\| = \|O^* - \hat{O}\|$. 
We do this by estimating $c^1, c^2, \ldots, c^n$ from our samples $\lambda^1, \ldots, \lambda^n$, filling in the indicators accordingly. Then, we use the estimate $\hat{O} = \frac{1}{n} \sum_{i=1}^n c^i$. Our first step is obtaining a bound on $\|\Delta_O\| = \|O^* - \hat{O}\|$.

We use the matrix Bernstein inequality following \cite{Tropp15}. 
Let $\Delta_O = \hat{O} - O^* = \sum_{i=1}^n S_i$, where $S_i = \frac{1}{n} (c^i (c^i)^T - O^*)$. 
Then, using Theorem 1.6.2 in \cite{Tropp15}, we can write
\begin{equation}
\E{}{\|\Delta_O\|} \leq \sqrt{2v(\Delta_O) \log(6o)} + \frac{1}{3} L \log(6o).
\label{eq:bernstein}
\end{equation}
Here, the dimensions of $\Delta_O$ are $3o \times 3o$, $v(\Delta_O)$ is the variance of $\Delta_O$, which is defined as $\|\E{}{\Delta_O\Delta_O^T}\|$, and, finally, $L$ is an upper bound on $\|\frac{1}{n} (c^i (c^i)^T - O^*)\|$. 
We can apply the result by taking the bound on $\|c^i\|$ to be $3o$ and a bound on $\|O^*\|$ to be $3o$ as well. 
We also need to bound the variance $v(\Delta_O)$; using the same ideas as in \cite{Tropp15}, we get $v(\Delta_O) \leq \frac{3o^2\|O\|}{n}$. Then, we have that

\begin{equation}
\label{eq:conc_bound}
\E{}{\|\Delta_O\|} \leq \sqrt{\frac{18 o^2  \log(6o)}{n}} + \frac{2o \log(6o)}{n}.
\end{equation}

This tells us how to bound the error between all the configurations that $\lambda_j$ and $\lambda_k$ can take on. 
We define $b^*$ as 
\[b^* = 
 \begin{bmatrix} 
\E{} {\prod_{U,U'} \lambda_j(U) \lambda_{k}(U') }\\  
\E{} {\prod_{U,U'} \lambda_j(U) \lambda_{f}(U') }\\ 
\E{} {\prod_{U,U'} \lambda_k(U) \lambda_{f}(U') } 
\end{bmatrix}. 
\]
Note that the $U$ sets are the same for all the sources, but we're summing over all possible values for each pair. 

We wish to bound $\|b^*-\hat{b}\|$, where $\hat{b}$ is the version of $b^*$ obtained with the use of the estimated $\hat{O}$. We do this for $j,k$ to write
\begin{align*}
|b^*_{j,k} - \hat{b}_{j,k}| &= \left\lvert\left(\sum_{w,z \text{ same sign}} P(\lambda_j = w, \lambda_k = z) - \sum_{w,z \text{ opp. sign}} P(\lambda_j = w, \lambda_k = z) \right)  \right. \\ 
\qquad \qquad \qquad &- \left. \left(\sum_{w,z \text{ same sign}} \hat{c}_{w,z} - \sum_{w,z \text{ opp. sign}} \hat{c}_{w,z} \right) \right\rvert.
\end{align*}
Here we broke up the product over the sum of terms that multiply to $1$ and those that multiply to $-1$. On the estimated side, we use the corresponding values of $\hat{c}$, which are our empirical estimates of the means of the $c$ indicators. Now, we upper bound by moving the sum out, to get
\begin{align*}
|b^*_{j,k} - \hat{b}_{j,k}| \leq \sum_{w,z} |P(\lambda_j=w, \lambda_k=z) - \hat{O}_{wz}|.
\end{align*}
Summing over all the sources, we get that
\[\|b^* - \hat{b}\|_1 \leq \|O^* - O\|_1.\]
From this, we have that
\begin{align}
\|b^* - \hat{b}\| \leq \sqrt{3o} \|O^* - O\|.
\label{eq:bo}
\end{align}

Now we have control over the gap between $b$ and $b^*$. Recall that we form $\hat{q}$ from $\log(\hat{b}^2)$, then we solve the $3 \times 3$ system in \eqref{eq:system3}. Let $\Delta_b = \hat{b} - b^*$. We have that, with the summation below running over the three pairs starting with $(j,k)$,
\begin{align*}
	\|\hat{q} - q^*\|^2
	&=
	\sum_{(j,k)} \left( \log(\hat{b}_{j,k}^2) - \log((b^*)_{j,k}^2) \right)^2 \\
	&=
	4 \sum_{(j,k)} \left( \log(|\hat{b}_{j,k}|) - \log(|(b^*)_{j,k}|) \right)^2 \\
	&= 
	4 \sum_{(j,k)} \left( \log(|b^*_{j,k} + (\Delta_b)_{j,k}|) - \log(|b^*_{j,k}|) \right)^2 \\
	&=
	4 \sum_{(j,k)} \left[ \log \left(1 + \left|\frac{(\Delta_b)_{j,k}}{b^*_{j,k}} \right| \right)\right]^2  \\
	&\leq 
	4 \sum_{(j,k)} \left( \frac{|(\Delta_b)_{j,k}|}{|b^*_{j,k}|} \right)^2  \\
	&\leq 
	\frac{4}{(b^*_{\min})^2} \sum_{(j,k)} (\Delta_b)_{j,k}^2 .
\end{align*}	
Here we used the fact that $(\log(1+x))^2 \leq x^2$. Next, we sum and take square roots and plug in our bound \eqref{eq:bo}
\begin{align*}
\|\hat{q} - q^*\| &\leq \frac{2}{b^*_{\min}} \|\Delta_b\| \\
&\leq \frac{2\sqrt{3o}}{b^*_{\min}} \|O^* - O\|.
\end{align*}

Next, we recall that $\hat{\rho} = \frac{1}{2} + \exp(\frac{\hat{\ell}}{2})$ and similarly for $\rho^*$. Here, the exponent is taken by entry. To obtain $\hat{\ell}$, we solve our system $M \hat{\ell} = \hat{q}$. Then, we have that
\begin{align*}
\| \hat{\rho} - \rho^*\| &= \left\lVert \exp\left(\frac{\hat{\ell}}{2}\right)  -  \exp\left(\frac{\ell^*}{2}\right) \right\rVert \\
&=    \left\lVert   \exp\left(\frac{\ell^*}{2}\right) \left( \exp\left(\frac{ \hat{\ell} - \ell^*}{2}\right) - 1\right) \right\rVert \\
&\leq   \left\lVert  \exp\left(\frac{\ell^*}{2}\right) \right\rVert \left\lVert  \exp\left(\frac{ \hat{\ell} - \ell^*}{2}\right) - 1  \right\rVert \\
&=   \left\lVert  \rho^* \right\rVert \left\lVert  \exp\left(\frac{ \hat{\ell} - \ell^*}{2}\right) - 1  \right\rVert \\
&\leq \sqrt{3}  \left\lVert  \exp\left(\frac{ \hat{\ell} - \ell^*}{2}\right) - 1  \right\rVert.
\end{align*}

If $x$ is small, the we have that $\exp(x)  \leq 2x + 1$. So, for large enough $n$, and thus the case of small $\hat{\ell} - \ell^*$, 
\[\left\lVert \exp\left(\frac{ \hat{\ell} - \ell^*}{2}\right) - 1 \right\rVert  \leq \| \hat{\ell} - \ell^* \|.\]

Thus,
\begin{align*}
\| \hat{\rho} - \rho^*\| &\leq \sqrt{3}  \| \hat{\ell} - \ell^* \| \\
&\leq \sqrt{3}  \|M^\dagger\| \|\hat{q} - q^* \| \\
&\leq \frac{6\sqrt{o}}{b^*_{\min}}  \|M^\dagger\| \|O^* - O\|.
\end{align*}

Since $\|x\|_\infty \leq \|x\|$, we also get that
\begin{align*}
\| \hat{\rho} - \rho^*\|_\infty &\leq  \frac{6\sqrt{o}}{b^*_{\min}}  \|M^\dagger\| \|O^* - O\|.
\end{align*}

This concludes our error analysis for the first system; we proceed to the second. Recall that we assemble the vector $r$ by stacking together $o = 2^T$ entries of various $\rho$'s. Thus, 
\begin{align*}
\|r^* - \hat{r}\| \leq \sqrt{o} \| \hat{\rho} - \rho^*\|_\infty \\
\leq \frac{6o}{b^*_{\min}}  \|M^\dagger\| \|O^* - O\|.
\end{align*}

Next, we deal with the second system: $B_{2T} e = r$ means $e = B_{2T}^{-1}r$, as $B_{2T}$ is full rank, so
\begin{align*}
\| \hat{e} - e^*\| &=  \|B_{2T}^{-1} (\hat{r} - r^*)\| \\
&\leq \|B_{2T}^{-1}\|  \| \hat{r} - r^*\| \\
&\leq  \|B_{2T}^{-1}\|  \frac{6o}{b^*_{\min}}  \|M^\dagger\| \|O^* - O\|.
\end{align*}

All of this was for a fixed source, and we have $m$ such sources. Now, forming $\mu$ from the terms in $e$ only involves scaling by the probabilities of the $Y$'s; the smallest such term is $p_\text{min}$. We have that 
\[\| \hat{\mu} - \mu^* \| \leq  \sqrt{m} \|B_{2T}^{-1}\| \frac{6o}{p_{\min} b^*_{\min}}  \|M^\dagger\| \|O^* - O\|.
 \]

Taking expectations, we have that 
\[\E{}{ \| \hat{\mu} - \mu^* \| } \leq \sqrt{m} \|B_{2T}^{-1}\| \frac{6o}{p_{\min} b^*_{\min}}  \|M^\dagger\| \left(\sqrt{\frac{18 o^2 \log(6o)}{n}} + \frac{2o \log(6o)}{n}\right).\]

Recall that $o = 2^{T}$, we get

\[\E{}{ \| \hat{\mu} - \mu^* \| } \leq 2^{2T} \sqrt{m} \frac{6}{p_{\min}b^*_{\min}} \|B_{2T}^{-1}\|  \|M^\dagger\| \left( \sqrt{\frac{18 \log(6\times 2^T)}{n}} + \frac{2\log(6 \times 2^T)}{n}\right). 
\]

Now we have the general expression. In the case of parameter reduction, we take $T=1$ in the powers, since all of our sources only use a single step, but now we use up to $mT$ of them, which replaces $m$. We then have,
\[
\E{}{ \| \hat{\mu} - \mu^* \| } \leq \sqrt{mT} \frac{24}{p_{\min} b^*_{\min}} \|B_{2T}^{-1}\|  \|M^\dagger\| \left( \sqrt{\frac{18 \log(12)}{n}} + \frac{2\log(12)}{n}\right). 
\]

The expressions for $\phi$ follow similarly, but with pairs of edges, so that we replace $\sqrt{m}$ with $m$, and similarly in $T$.
\end{proof}

We also note that it is possible to derive a matching lower bound reveals the same scaling in $T$ for the general case. This can be done by noting that for worst-case distributions, the matrix Bernstein inequality is sharp \cite{Tropp15}, and, as a result, passing through our linear systems, we can lower bound the resulting error. 

\subsection{Identifiability}
Next we discuss identifiability for our models. Our algorithm already implicitly guarantees identifiability under our assumptions, but we may be interested in which situations are sufficient, in the challenging multi-resolution setting with many latent labels, to guarantee model identifiability in general.

Our approach is to apply results from the seminal work \cite{Allman}, which provides identifiability results based on Kruskals' theorem on the uniqueness of 3-tensor decompositions. Applying this result in creative ways, \cite{Allman} recovers results on identifiability for latent mixtures of product distributions, hidden Markov models (HMMs), and other latent variable models. The sense of identifiability here is \emph{generic identifiability}, which corresponds to the information-geometric view. We consider the parameter space of our models as a variety, and demonstrate identifiability everywhere except potentially a measure-zero subvariety. 

We break down our approach into two cases. The first case does not have any type of parameter reduction. The second case does. In both cases, we consider the parametrizations of both the true label and the sources. As usual, our sequence is of length $T$.

\paragraph{General case} 
First, we consider the case where the distribution of true labels does not factorize across time. We consider labels for all subsets of steps of lengths 1 ($Y_1$ resolution, or frames), length $T$ (the full-sequence level) and one additional resolution window level, such $g$, with $1<g<T$. 
We assume that the labels are in $\{0,\ldots, r-1\}$, so that our label alphabet is  \[\mathcal{Y} = r^{T+1+(T-g+1)}.\]

Next, we consider $m$ sources that vote on (some part) of the time series sequence $X$. We view these as being independent conditioned on ${Y}$ (we can group non-independent sources together if necessary and simply count the remaining components). 

Let us say that each supervision source is capable of producing one of $v$ possible votes on the time series. One concrete example is sources that can label each window of length up to $w$ (with each label having $r$ choices). Then the total number of possible votes is \[v = r^T + r^{T-1} + \ldots + r^{T-w+1} = \frac{r^{T+1}-r^{T+1-w}}{r-1}.\]

Now we can apply Corollary 5 from \cite{Allman}, which states that identifiability is guaranteed if \[m \geq 2 \lceil \log_v r^{2T+2-g} \rceil + 1.\]

For an example of how this works, consider $r=2$, so that each subset gets a binary label, as throughout our paper. Then, $v = 2^{T+1} - 2^{T+1-w}$. 
If our window size is just 1, we get $v = 2^T$, and then, if, say $g \geq 2$, that $ \log_v 2 \lceil \log_v 2^{2T+2-g} \rceil + 1 = 2 \times 2 + 1 = 5$, so that we need 5 sources.

\paragraph{Parameter Reduction}
The previous approach is challenging practically, as we showed in our theorem in the previous section. The parameter space is very large---which makes parameter recovery challenging even if identifiability is ensured. In fact there is a tension between identifiability and recovery, since the first requires a large number of parameters, while the second is easier with fewer. Below we describe identifiability in the general case using based on parameter reduction. We exploit a type of reduction to the HMM model.

First, we use a Markovian model for $Y = (y_1, \ldots, y_{2T+1})$. The $2T+1$ is simply for convenience here. We group together windows of length $u$. For convenience, say $u | (2T+1)$. Then, let each of the groups $(y_1, \ldots, y_u), (y_{u+1}, \ldots, y_{2u}), \ldots$ form a stationary Markov chain. The state space corresponds to the space of labels of subsets of the window, which has cardinality $r^{2^u}$---we allow labels for any particular subset of the window. 

Next, we similarly set up a model for each of the sources. We set the number of outputs for each window of length $u$ to again be $v$. Again, if we have $m$ conditionally independent sources, our alphabet over the sources is $v^m$. Moreover, the Kruskal rank of the product distribution matrix of the sources has full Kruskal rank if each of the functions does. This puts us in a position to apply Theorem 6 from \cite{Allman}, for the HMM-style model we've just defined. We need, for identifiability, that the number of windows we label, which is $k = (2T+1)/u$, satisfies
\[ \binom{k + v^m - 1}{v^m - 1} \geq r^{2^u}.\]

We can write this as a function of $T$:
\begin{align*}
&\binom{(2T+1)/u + v^m - 1}{v^m - 1} \geq r^{2^u}.
\end{align*}
Now we can express this in terms of particular variables; note that this is a tradeoff between:
\begin{itemize}
    \setlength{\itemsep}{0pt}
	\setlength{\parskip}{0pt}
	\item $u$, the complexity of the chain for $Y$,
	\item $T$, the parameter for the length of the sequence, 
	\item $v$, the resolution of the supervision source votes, and
	\item $m$, the number of conditionally independent sources.
\end{itemize} 

\subsection{Simulations}
Simulations of effects of increasing $N$ and parameter tying on estimation
error, effects of modeling sequential dependencies on prediction performance,
and runtime are shown in Figure~\ref{fig:sims}.
\begin{figure}
    \includegraphics[width=\columnwidth]{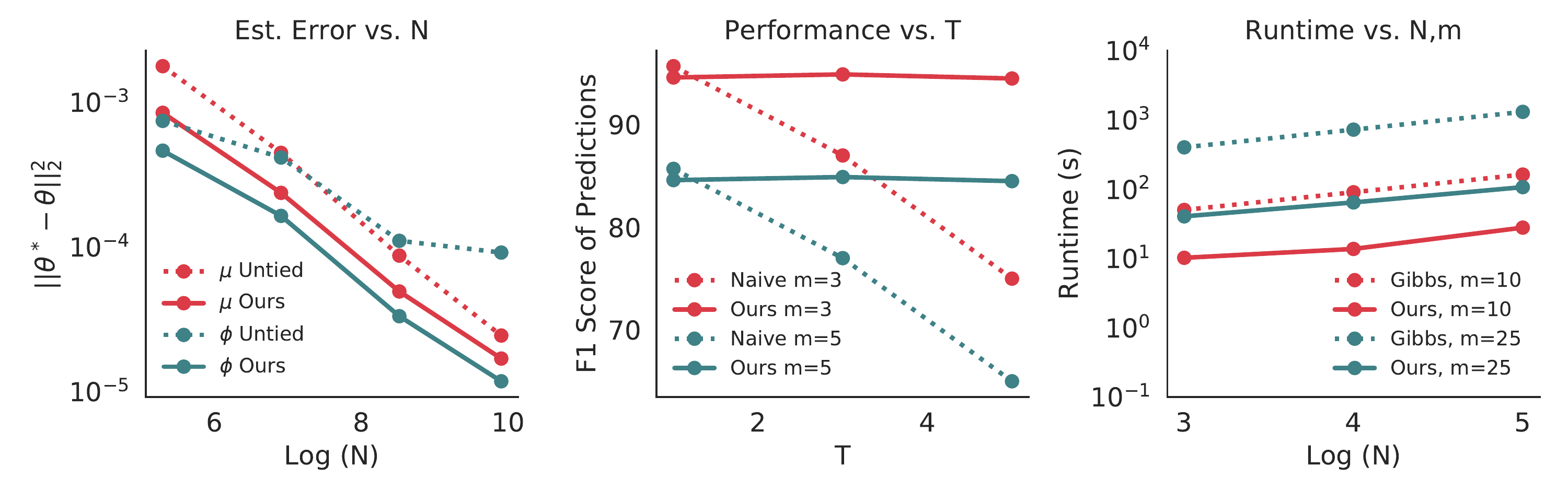}
    \caption{Simulation plots. (Left) Estimation error $||\hat{\mu} - \mu^*||^2$ decreases with increasing $N$ and improves with parameter tying (Ours). (Middle) Modeling sequential dependencies (Ours) leads to improved prediction performance over naive model. (Right) Our runtime is up to $90\times$ faster than Gibbs-sampling based approaches.}
    \vspace{-1.0em}
    \label{fig:sims}
  \end{figure}
  
\subsection{Conditional Independencies}
In Section~\ref{sec:source_accuracy_estimation}, we discussed assumptions that are used in the parameter recovery algorithm. We described that we use a list of independencies among the source agreement measures (that is, the $a_j(U,V)$ terms). These independencies can be derived from the source graphs, which are provided by the user. Below, we give additional details on this operation.

Note that there are multiple types of graphical models that provide us with such independencies. Our method is agnostic to this choice, as long as we can obtain the list of independencies. The illustrative example we consider is that of a binary Ising model where the sources have no singleton potentials. For simplicity of notation, we simply show a single label $Y = Y_1$ and the sources $\lf_1, \ldots, \lf_m$ (that is, we take $\tau_1 =\{1, \ldots, m\}$); normally, we would have a full model over all the $Y_i$'s and $\lf_j$'s. We write the density as
\[f(Y, \lambda_1, \lambda_2, \ldots, \lf_m) = \frac{1}{Z} \exp \left( \theta_{Y} Y + \sum_{i=1}^m \theta_{Y,i} Y\lf_i + \sum_{(i,j) \in E} \theta_{i,j} \lf_i \lf_j\right) .\]
Here, $Z$ is the partition function, $E$ is the edge set among the sources in the graphical model, and the $\theta$'s are canonical parameters. The following argument provides the intuition for why the $a_j(U,V)$ are independent in this setting whenever their nodes are disconnected in the graph. We have that 
\begin{align*}
f(Y, \lambda_1, \lambda_2, \ldots, \lf_m) &= \frac{1}{Z} \exp \left( \theta_{Y} Y + \sum_{i=1}^m \theta_{Y,i} Y\lf_i + \sum_{(i,j) \in E} \theta_{i,j} \lf_i \lf_j\right) \\
&=  \exp \left( \theta_{Y} Y + \sum_{i=1}^m \theta_{Y,i} Y\lf_i + \sum_{(i,j) \in E} \theta_{i,j} (\lf_i Y) (\lf_j Y) \right) \\
&=  \exp \left( \theta_{Y} Y + \sum_{i=1}^m  \theta_{Y,i}  a_i(U,V) + \sum_{(i,j) \in E} \theta_{i,j} a_i(U,V) a_j(U,V) \right),
\end{align*}
which indeed factorizes as long as there is no path in $G_{\text{source}}$ between $i$ and $j$ (other than the one through $Y$). Next, these terms are summed to produce a distribution over the $a_j$'s, and \emph{symmetries} enable us to produce the desired independencies.

The above showed the simplified case where $U$ is a single source and $V$ consists of the $Y$ label only. This can be extended to the case where $|U| > 1$ or $|V| > 1$ (or both). There is a parity requirement for the symmetries to work. Specifically, at least one of the $a_i, a_j$ must involve an even number of terms, that is, $|U| + |V|$ is even.  

\section{Extended Experimental Details}
\label{sec:appendix}
We describe additional details about the tasks described in Section~\ref{sec:exp}, including details about supervision sources, the user-defined class prior, and the end model trained on labels generated by baselines and our method. Dataset statistics provided in Table~\ref{table:stats}.

\subsection{Dataset Details and Train/Dev/Test Splits}
\textbf{Bicuspid Aortic Valive (BAV)}:
We use the dataset from~\cite{fries2018weakly} and use the train/dev/test
splits from that work.

\textbf{Interview Detection (Interview)}:
We use the dataset from~\cite{fu2019rekall} and use the dev/test splits from
that work. We additionally use an additional $57$ hours of unlabelled data as
the train split.

\textbf{Freezing Gait (Gait)}:
Our dataset consists of sensor data sessions from different patients.
We reserve a collection of sessions for dev and test, and split by patient to
ensure that dev and test come from similar distributions.

\textbf{Movie Shot Detection (Shot)}:
Our total dataset consists of 589 Hollywood movies (roughly 1200 hours).
We treat windows of $16$ consecutive frames as elements in our sequence notation
and classify individual elements with the end model (so a sequence of length
five is $80$ consecutive frames).
We have ground truth annotations for 45 minutes of data randomly distributed
across 29 of the movies, which we split into the dev and test set by scene.
We note that this gives an unfair advantage to traditional supervision, but
\sn{} outperforms nonetheless. 

\textbf{ActivityNet Basketball Identification (Basketball)}:
We take the subset of ActivityNet videos containing sports videos and aim to
identify basketball videos.
We sample one frame every two seconds and classify individual frames as either
coming from basketball videos or the other sports videos.
We randomly select $5\%$ of the videos as our dev set and $5\%$ of the videos
as our test set.

\textbf{Cyclist Detection in Self-Driving Car Dataset (Car)}:
Our dataset consists of 50 minute of self-driving car dash cam footage, split
into five videos \cite{cydet}.
The task is to identify whether individual frames, sampled at 10 FPS, contain
cyclists.
We select one video to split into dev and test, and reserve the rest of the
videos as unlabelled training data.
We split dev/test by taking a strided window of $5$ frames with a stride of $10$
frames (starting dev and test at frames $1$ and $6$ respectively) to ensure
that dev and test come from the same distribution.

\textbf{Disorder Tagging in Electronic Health Record Text (EHR)}: This dataset consists of 299 patient notes sampled from MIMIC-III\cite{mowery2014task, johnson2016mimic}, labeled for all mentions of disorders (e.g., aortic stenosis, pneumonia). 
The dataset is split into 133 development documents and 166 test documents, containing 10,940 and 16,641 sentences respectively.
Training data consists of 10,000 sentences randomly sampled from 5,000 unlabeled MIMIC-III documents.
Labels are generated per-word in IO (inside/outside) tag format.

%
%
\subsection{Task-Specific End Models}
For BAV and Shot tasks, we use previously published end model architectures.  
For Gait and Open tasks, we rely on generic, off-the-shelf architectures commonly used for these modalities.  
We do not claim that these end models achieve the best possible performance for these tasks; our goal is to compare the relative improvements that our sequential weak supervision model provides compared to other baselines, which is orthogonal to achieving state-of-the-art performance for these specific tasks.

\textbf{Bicuspid Aortic Valve (BAV)}: We use the CNN-LSTM architecture described in \cite{fries2018weakly} for use in classifying aortic valve malformations.
This architecture includes a frame encoder for learning frame-level features and a sequence encoder for combining individual frames into a single feature vector. 
The frame encoder is a Dense Convolutional Network (DenseNet) \cite{Huang2017-fw} with 40 layers and a growth rate of 12, pretrained on 50,000 images from CIFAR-10 \cite{Krizhevsky2009-zf}. 
The sequence encoder is a bidirectional Long Short-term Memory (LSTM) \cite{Hochreiter1997-fj} with soft attention \cite{Xu2015-eo}.
All weights were fine-tuned during training. Models are trained using all MRI frames as input. 

\textbf{Interview Detection (Interview)}: We use ResNet-50 pre-trained on ImageNet to classify individual frames of the video.

\textbf{Freezing Gait (Gait)}: We use a single layer bidirectional LSTM and hidden state dimension 300 as our end model that takes in a multivariate sensor stream as input. In order to provide longer sequential context, we pass in a windowed version of each candidate that includes past and future frames. Window size was tuned empirically, with {[}-3,+1{]} performing best overall. Since the sequence length of each frame slightly varies, we then pad these sequences (with 0's) and truncate any sequences over a pre-defined maximum sequence length. To provide more contextual signal, we also add multiplicative attention to pool over the hidden states in the LSTM.

\textbf{Movie Shot Detection (Shot)}: We use a C3D ConvNet with a ResNet-18
backbone pre-trained on the Kinetics dataset \cite{hara3dcnns}.
This is a common architecture for deep shot detection \cite{hassanien2017deepsbd, tang2018dsm}.
We feed in 16 conseuctive frames as input and classify whether or not there is
a shot boundary in the 16 frames. 

\textbf{ActivityNet Basketball Identification (Basketball)}:
We use ResNet-18 pre-trained on ImageNet to classify individual frames of the
video.

\textbf{Cyclist Detection in Self-Driving Car Dataset (Car)}:
We only report label model results for this dataset, since we cannot release
the proprietary end model used for this task.

\textbf{Disorder Tagging in Electronic Health Record Text (EHR)}: We only report label model results for this dataset.

%
%
\subsection{Supervision Sources}
Supervision sources are expressed as Python functions with an average of 5 lines each. 
We list how many of the supervision sources operated on an element-level basis, a subsequence level basis (more than one frame), and a sequence level basis in Table~\ref{table:stats}. 
The supervision sources relied on the following information to assign noisy labels:

\textbf{Bicuspid Aortic Valve (BAV)}: First, the aortic valve in each frame was segmented using an intensity-based thresholding technique. 
The supervision sources relied on feature values derived from these segmented regions (i.e., area, perimeter, average intensity, eccentricity, and ratio of area and perimeter) to assign labels to each frame. 
Each supervision source assigned a label to the same frame that it used information from.

\textbf{Interview Detection (Interview)}: Two weak supervision sources use face identities on individual frames; they vote yes if Bernie Sanders or a host are detected in a particular frame, respectively.
One labeling function checks whether the text ``thank you" appears in the transcript within $30$ seconds of a segment, and another checks whether there are the same number faces over the course of $30$ seconds.

\textbf{Freezing Gait (Gait)}: The first supervision source employed uses stride time arrhythmicity \citep{plotnik2005freezing, plotnik2007new}, which is calculated as average coefficient of variation for the past 3 stride times of the left and right leg.
In addition to stride time arrhythmicity, other supervision sources we use involve the swing angular range of the shank, and the amplitude and variance in shank angular velocity.
Out of the five total supervision sources used for this task, 3 of them operated on an element-level basis, and 2 of them operated on a sequence level (4 and 11 frames at a time). 

\textbf{Movie Shot Detection (Shot)}: We compute frame-to-frame differences in HSV, RGB, and optical flow histograms.
We detect frames that have large amounts of visual change from the frames
immediately preceding them by detecting outliers in the frame-to-frame
differences between histograms.
These make up our three sequence-level weak supervision sources.
We also introduce two weak supervision sources based on face detections.
We run the MTCNN face detector \cite{zhang2016joint} twice a second (once every
twelve frames for a film shot at 24 FPS) and say that there is no shot change
between detections if we find the same number of faces or if we find faces in
the same location.
On the other hand, if we find faces in different locations between detections,
we say that there is a shot change.
These make up our two subsequence-level weak supervision sources.

\textbf{ActivityNet Basketball Identification (Basketball)}:
We use an off-the-shelf object detector \cite{yolov3} on one frame every two
seconds to generate primitives.
Our weak supervision sources operate on the objects detected in each frame; we
detect whether a person and ball are detected in the frame, what the distance
between the person and ball in the frame are, the color of the ball, and how
much vertical distance the ball moves across  a sequence.

\textbf{Cyclist Detection in Self-Driving Car Dataset (Car)}:
We aim to classify whether frames contain cyclists in a representative sample
of a self-driving car dataset.
We detect whether frames have people or bicycles using an off-the-shelf object
detector \cite{massa2018mrcnn}.
The object detect small bikes (i.e. when the bikes are far away), so we also
write some heuristics for small person detections.

\textbf{Disorder Tagging in Electronic Health Record Text (EHR)}:
Supervision sources are a collection of biomedical lexicons from the Unified Medical Language System (UMLS) \cite{bodenreider2004unified} and a single stopword list. 
UMLS lexicons are broken down by semantic type (e.g., Disease or Syndrome, Finding) with each type mapped to a positive or negative label.
Positive and negative lexicons are merged by source vocabulary (e.g., SNOMEDCT\_US) to generate 12 supervision sources.


\begin{table}[ht!]
    \centering
    \begin{tabular}{@{}lccccccccc@{}}
        \toprule
                         & \multicolumn{1}{l}{} & \multicolumn{4}{c}{\textbf{Dataset Statistics}}                                                 & \multicolumn{4}{c}{\textbf{Supervision Statistics}}                                       \\ \cmidrule(l){3-10} 
        \textbf{Dataset} & \textbf{End Model} & \textbf{T} & \textbf{$N_{train}$} & \textbf{$N_{dev}$}   & \textbf{$N_{test}$}  & \textbf{m} & \textbf{$R_1$} & \textbf{$R_2$} & \textbf{$R_3$} \\ \midrule
        BAV              & CNN-LSTM           &  5         & 4329                 & 10 6                 & 94                   & 5          & 5               & 0              & 0              \\
        Interview    & ResNet-50      &  5        & 6835                  & 3026                   & 3563                   & 4          & 2               & 2              & 0              \\
        Gait             & LSTM               &  3         & 1793                 & 630                  & 1014                 & 5          & 3               & 0              & 2              \\
        Shot             & C3D ConvNet        &  5         & 35,376               & 363                  & 369                  & 5          & 0               & 2              & 3              \\
        Basketball       & ResNet-18          &  5         & 3594                 & 212                  & 244                  & 4          & 3               & 0              & 1              \\ 
        Car              & -                  &  5         & 4785                 & 670                  & 669                  & 4          & 4               & 0              & 0              \\ 
        EHR              & -                  &  2         & 10,000               & 10,940               & 16,641               & 12         & 12              & 0              & 0              \\ \bottomrule
        \end{tabular}
    \caption{We report the train/dev/test split in terms of the number of sequences in each set. The dev and test set have ground truth labels, which we assign labels to the train set using our method or one of the baseline methods. $m$ is the number of supervision sources, with $R_1$, $R_2$, and $R_3$ the number of supervision sources that label individual elements, subsequences, or the whole sequence, respectively.}
    \label{table:stats}
    \end{table}

\subsection{User-Defined Class Prior}
We set the task-specific class prior for the tasks in the following manner. As discussed in Section~\ref{sec:exp}, the user-defined prior outperformed the uniform and development set based for Open and Shot, but not for BAV and Gait.

\textbf{Bicuspid Aortic Valve (BAV)}: The labels were assigned on a sequence level for this task. 
The estimated incidence of BAV in the population is $1$-$2\%$, which was used to set the user-defined prior. 
However, the development and test sets have much higher prevalence rates ($6$-$7\%$) as an artifact of their construction, thus an empirical prior derived directly from the development set performed best overall.
Note the uniform class balance performs poorly due to incorrectly assuming that all label combinations within a sequence are equally likely, i.e.,  frames \emph{within} a single sequence can alternate between BAV and normal. 

\textbf{Interview Detection (Interview)}: The class priors were set starting with the class balance from the development set, and then slightly adjusting probabilities based on intuition of interview incidence.

\textbf{Freezing Gait (Gait)}: We design our own class prior by first starting with the class balance from the validation set, and then slightly adjusting probabilities based on intuition of freezing behavior. For example, we don't expect freezing and non-freezing behavior to alternate frequently in successive gait cycles so we assign very low probabilities to these events. Other sequences, such as consecutive freezing and consecutive non-freezing are likely more common, so we assign relatively high probabilities to these events.

\textbf{Movie Shot Detection (Shot)}: In this task, the labels are assigned to individual candidates in a
five-candidate sequence.
Each candidate is in turn a window of 16 consecutive frames.
We set our prior based on the development set, but we manually reduce the
likelihood of rare sequences to $0\%$ (in particular, we set the likelihood of
a sequence to $0\%$ if we observe five or fewer instances in our development
set).

\textbf{ActivityNet Basketball Identification (Basketball)}:
In this task, labels are assigned per-frame for the end model, but our sequential
modeling views sequences of five frames.
We set our prior based on the development set, then slightly adjust the values
based on intuition.

\textbf{Cyclist Detection in Self-Driving Car Dataset (Car)}:
We set our prior based on the development set, but manually reduce the likelihood
of rare sequences to $0$ (in particular, we set the likelihood of a sequence to
$0$ if we observe five or fewer instances in our development set).

\textbf{Disorder Tagging in Electronic Health Record Text (EHR)}:
We set our prior based on the development set.

\subsection{Detailed Results}
\label{sec:appendix_results}
We report detailed precision, recall, and F1 results for all datasets in
Tables~\ref{table:p_r_f1_baselines} and~\ref{table:p_r_f1_ablations}.

\begin{table}[htbp]
    \centering
    \small
    \begin{tabular}{@{}lrcccccccccc@{}}
        \toprule
        \multicolumn{1}{c}{} &                 & \multicolumn{3}{c}{\textbf{Baselines}}            & \\ \cmidrule(l){3-6}
        \textbf{Task}        & \textbf{Metric} & \textbf{TS}     & \textbf{MV}    & \textbf{DP}    & \textbf{\sn}\\ \midrule
                            & Precision        & 26.1 $\pm$ 3.8  & 6.9 $\pm$ 8.6  & 70.0 $\pm$ 19.8 & \textbf{100.0 $\pm$ 0.0} & \\
         BAV                & Recall           & 20.0 $\pm$ 7.0  & 5.7 $\pm$ 7.0  & 45.7 $\pm$ 5.7  &  \textbf{37.1 $\pm$ 7.0}  & \\
                            & F1               & 22.1 $\pm$ 5.1  & 6.2 $\pm$ 7.6  & 53.2 $\pm$ 4.4  & \textbf{53.8 $\pm$ 7.6}  & \\ \hline
                            & Precision        & 72.4 $\pm$ 4.0  & 48.7 $\pm$ 5.7 & 4.5 $\pm$ 0.1 & \textbf{89.6 $\pm$ 4.2} & \\
         Interview          & Recall           & 89.5 $\pm$ 3.0  & 72.4 $\pm$ 6.4 & \textbf{99.1 $\pm$ 0.0}  & 94.6 $\pm$ 0.8  & \\
                            & F1               & 80.0 $\pm$ 3.4  & 58.0 $\pm$ 5.3 & 8.7 $\pm$ 0.2  & \textbf{92.0 $\pm$ 2.2}  & \\ \hline
                            & Precision        & 65.2 $\pm$ 13.7 & 47.0 $\pm$ 1.0 & 50.3 $\pm$ 1.6 & \textbf{65.6 $\pm$ 1.5} & \\
         Gait               & Recall           & 47.6 $\pm$ 28.1 & \textbf{89.8 $\pm$ 3.0} & 84.1 $\pm$ 2.2 & 70.8 $\pm$ 2.4  & \\
                            & F1               & 47.5 $\pm$ 14.9 & 61.6 $\pm$ 0.4 & 62.9 $\pm$ 0.6 & \textbf{68.0 $\pm$ 0.7}               & \\ \hline
                            & Precision        & 87.7 $\pm$ 2.5  & 79.7 $\pm$ 2.1 & 79.0 $\pm$ 1.9 & \textbf{87.8 $\pm$ 2.9}   & \\ 
         Shot               & Recall           & 79.3 $\pm$ 1.3  & 93.4 $\pm$ 1.0 & \textbf{94.3 $\pm$ 1.2} & 87.6 $\pm$ 3.4   & \\
                            & F1               & 83.2 $\pm$ 1.0  & 86.0 $\pm$ 0.9 & 86.2 $\pm$ 1.1 & \textbf{87.7 $\pm$ 1.0}   & \\ \hline
                            & Precision        & 30.3 $\pm$ 3.6  & 10.0 $\pm$ 6.9 & 7.6 $\pm$ 2.9  & \textbf{33.0 $\pm$ 4.0}   & \\
         Basketball         & Recall           & 24.1 $\pm$ 0.4  & 6.8 $\pm$ 4.4  & 8.0 $\pm$ 3.7  & \textbf{46.0 $\pm$ 7.2}   & \\
                            & F1               & 26.8 $\pm$ 1.3  & 8.1 $\pm$ 5.4  & 7.7 $\pm$ 3.3  & \textbf{38.2 $\pm$ 4.1}   & \\ \hline
                            & Precision        & N/A             & 57.7           & 57.8           & \textbf{95.3}   & \\
         Car$^*$            & Recall           & N/A             & 81.3           & \textbf{83.7}  & 64.9            & \\
                            & F1               & N/A             & 67.5           & 68.4           & \textbf{77.3}   & \\ \hline                             
                            & Precision        & N/A             & 82.7           & 79.1           & \textbf{85.6}   & \\
         EHR$^*$            & Recall           & N/A             & 57.4           & 61.2           & \textbf{61.4}   & \\
                            & F1               & N/A             & 67.8           & 69.0           & \textbf{71.5}   & \\ \bottomrule                                            
        \end{tabular}
        \caption{Precision, recall, and F1 numbers for baselines.
        All reported values are means across five random weight initializations,
        $\pm$ standard deviation, except for the \textbf{Car} and \textbf{EHR} task, where we
        only report label model performance.}
    \label{table:p_r_f1_baselines}
\end{table}


\begin{table}
    \centering
    \small
    \begin{tabular}{@{}lrcccccccc@{}}
        \toprule
        \multicolumn{1}{c}{} &                 & \multicolumn{2}{c}{\textbf{Parameter Ablations}}  &  \multicolumn{3}{c}{\textbf{Class Prior}}                   \\ \cmidrule(l){3-4} \cmidrule(l){5-7}
        \textbf{Task}     & \textbf{Metric} & \textbf{w.o Param Tie}  & \textbf{w.o Temp Deps}  &  \textbf{Uniform} & \textbf{Dev}            & \textbf{User}           \\ \midrule
                          & Precision       &  75.3 $\pm$ 13.6        &  42.5 $\pm$ 17.0        &  24.7 $\pm$ 13.3    & \textbf{100.0 $\pm$ 0.0}         & 99.8 $\pm$ 0.5                        \\
          BAV             & Recall          &  42.9 $\pm$ 9.0         &  \textbf{45.7 $\pm$ 5.7}&  17.1 $\pm$ 5.7    & 37.1 $\pm$ 7.0 & 34.3 $\pm$ 14.6    \\
                          & F1              &  53.0 $\pm$ 5.7         &  41.9 $\pm$ 6.7         &  20.0 $\pm$ 8.3    & \textbf{53.8 $\pm$ 7.6} & 48.1 $\pm$ 14.5  \\ \hline
                          & Precision       &  86.7 $\pm$ 7.0         &  80.8 $\pm$ 5.3         &  73.2 $\pm$ 3.3    & \textbf{89.6 $\pm$ 4.2} & 88.4 $\pm$ 3.4                        \\
          Interview       & Recall          &  92.2 $\pm$ 2.3         &  88.4 $\pm$ 7.3         &  94.4 $\pm$ 0.0  & \textbf{94.6 $\pm$ 0.8} & 91.4 $\pm$ 6.8    \\
                          & F1              &  89.2 $\pm$ 3.6         &  84.2 $\pm$ 3.8         &  82.4 $\pm$ 2.0    & \textbf{92.0 $\pm$ 2.2} & 89.8 $\pm$ 4.9  \\ \hline
                          & Precision       & 49.7 $\pm$ 3.8          &  66.5 $\pm$ 2.1         &  65.6 $\pm$ 1.5   & 57.9 $\pm$ 2.9          & \textbf{67.9 $\pm$ 6.3} \\
          Gait            & Recall          & 74.7 $\pm$ 9.5          &  64.7 $\pm$ 5.7         &  70.8 $\pm$ 2.4   & \textbf{80.5 $\pm$ 3.7}          & 66.3 $\pm$ 7.5       \\
                          & F1              & 59.5 $\pm$ 5.4          &  65.3 $\pm$ 2.2         &  \textbf{68.0 $\pm$ 0.7}   & 67.2 $\pm$ 0.6          & 66.3 $\pm$ 1.1       \\ \hline
                          & Precision       & 84.3 $\pm$ 3.2          & 82.3 $\pm$ 1.8          &  60.9 $\pm$ 6.5   & 70.8 $\pm$ 2.3          & \textbf{87.8 $\pm$ 2.9} \\ 
          Shot            & Recall          & 88.6 $\pm$ 2.1          & 91.4 $\pm$ 1.1          &  91.9 $\pm$ 6.3   & \textbf{95.0 $\pm$ 1.1} & 87.6 $\pm$ 3.4          \\
                          & F1              & 86.6 $\pm$ 1.3          & 86.6 $\pm$ 0.5          &  72.9 $\pm$ 3.4   & 81.1 $\pm$ 1.2          & \textbf{87.7 $\pm$ 1.0} \\ \bottomrule
                          & Precision       & 24.2 $\pm$ 3.4          & 5.5 $\pm$ 0.5           &  10.0 $\pm$ 0.0   & 24.1 $\pm$ 11.2         & \textbf{33.0 $\pm$ 4.0} \\ 
          Basketball      & Recall          & \textbf{51.6 $\pm$ 9.0} & 45.1 $\pm$ 4.1          &  100.0 $\pm$ 0.0  & 33.5 $\pm$ 22.8         & 46.0 $\pm$ 7.2 \\
                          & F1              & 32.9 $\pm$ 4.9          & 9.9 $\pm$ 0.8           &  18.3 $\pm$ 0.0   & 27.6 $\pm$ 15.4         & \textbf{38.2 $\pm$ 4.1} \\ \bottomrule
        \end{tabular}
        \caption{Precision, recall, and F1 numbers for ablations.}
    \label{table:p_r_f1_ablations}
\end{table}

\subsection{Parameter Ablations}
We examine how the following elements of our method improve empirical performance (Table~\ref{table:p_r_f1_ablations}):

\textit{No parameter reduction (w.o Param Tie):} We force our model to learn a separate accuracy parameter per supervision source per resolution it labels and a separate correlation parameter per pairwise dependency. We show that this can hurt end model performance by \shareimp{} F1 points on average since there is not enough data to correctly estimate this many parameters. 

\textit{No sequential Dependencies (w.o Temp Deps):} We remove all sequential dependencies from our model, but still learn accuracy parameters for the supervision sources with parameter reduction. This hurts end model performance by \depsimp{} F1 points on average since removing the sequential dependencies among the supervision sources leads to ``double counting'' of the votes from sources that use similar information from the underlying data and overestimates the accuracies for these sources.

\subsection{Class Prior Ablations}
We examine the effect of the user-defined prior for the distribution of labels in a sequence (Table~\ref{table:p_r_f1_ablations}):

\textit{Uniform Probability (Uniform)}: All label configurations for a given sequence are equally likely.

\textit{Prior based on Dev Set (Dev)}: Class priors are set empirically using the development set.

\textit{User Defined Prior (User)}: The user defines a class distribution manually, and we provide task specific details in the Appendix. 
For Shot, the user-defined prior improves end model performance by \num{14.8} F1 points compared to uniform prior since shot transitions are rare events. 
Gait performs the best with a uniform prior, which is expected since there is no clear pattern in how freezing occurs while walking.

\end{document}